\documentclass{article}

\usepackage{PRIMEarxiv}
\usepackage[numbers]{natbib}
\usepackage{color}
\usepackage[utf8]{inputenc} 
\usepackage[T1]{fontenc}    
\usepackage{hyperref}       
\usepackage{url}            
\usepackage{booktabs}       
\usepackage{amsfonts}       
\usepackage{nicefrac}       
\usepackage{microtype}      
\usepackage{lipsum}
\usepackage{fancyhdr}       
\usepackage{graphicx}       
\usepackage[export]{adjustbox}
\graphicspath{{media/}}     
\usepackage{amsmath}   
\usepackage{amssymb}   
\usepackage{amsfonts}  
\usepackage{subfigure}
\usepackage{algorithm}
\usepackage{algpseudocode}
\usepackage{amsthm}
\usepackage{hyperref}
\newtheorem{theorem}{Theorem}[section]

\newtheorem{proposition}[theorem]{Proposition}

\title{Adaptive diffusion posterior sampling for data and model fusion of complex nonlinear dynamical systems}
\author{
  Dibyajyoti Chakraborty \\
  College of Information Sciences and Technology,\\
  Pennsylvania State University,\\
  University Park, 16802, PA, USA
  \AND
  Hojin Kim, Romit Maulik \\
  School of Mechanical Engineering,\\
  Purdue University,\\
  West Lafayette, 47907, IN, USA
}

\date{December 2025}

\begin{document}

\maketitle
\begin{abstract}
High-fidelity numerical simulations of chaotic, high dimensional nonlinear dynamical systems are computationally expensive, necessitating the development of efficient surrogate models. Most surrogate models for such systems are deterministic, for example when neural operators are involved. However, deterministic models often fail to capture the intrinsic distributional uncertainty of chaotic systems. This work presents a surrogate modeling formulation that leverages generative machine learning, where a deep learning diffusion model is used to probabilistically forecast turbulent flows over long horizons. We introduce a multi-step autoregressive diffusion objective that significantly enhances long-rollout stability compared to standard single-step training. To handle complex, unstructured geometries, we utilize a multi-scale graph transformer architecture incorporating diffusion preconditioning and voxel-grid pooling. More importantly, our modeling framework provides a unified platform that also predicts spatiotemporally important locations for sensor placement, either via uncertainty estimates or through an error-estimation module. Finally, the observations of the ground truth state at these dynamically varying sensor locations are assimilated using diffusion posterior sampling requiring no retraining of the surrogate model. We present our methodology on two-dimensional homogeneous and isotropic turbulence and for a flow over a backwards-facing step, demonstrating its utility in forecasting, adaptive sensor placement, and data assimilation for high dimensional chaotic systems.
\end{abstract}

\section{Introduction}\label{sec:intro}
High-fidelity simulation of chaotic dynamical systems such as turbulent flows remains prohibitively expensive at practically relevant Reynolds numbers, with DNS requiring $\mathcal{O}(Re^{9/4})$ degrees of freedom in three dimensions 
\cite{pope2001turbulent}. While deep learning surrogates have dramatically reduced this cost, deterministic models share a fundamental statistical deficiency: they collapse the full conditional distribution of the true stochastic system to 
a point prediction. For any chaotic system with intrinsic state variance, this incurs an irreducible per-step Wasserstein error that accumulates exponentially over the forecast horizon. Diffusion models \cite{ho2020denoising,karras2022elucidating} resolve this by learning the full conditional distribution, making their per-step mismatch entirely reducible by training (Ref. Section \ref{sec:error_analysis} for further details). In several practical applications, training data from the dynamical system that may be used by a surrogate, is available on point clouds or meshes that result from numerical discretizations of governing laws. This brings forth the need to design accurate diffusion-based surrogate models that are specialized for forecasting high dimensional chaotic dynamical systems on unstructured meshes.  

Beyond forecasting, practical deployment requires fusing model predictions with sparse runtime observations from physical sensors. Diffusion posterior sampling \cite{chung2022diffusion,rozet2023score} achieves this by augmenting each denoising step with a likelihood gradient, steering the generative process toward the posterior without retraining. This raises a natural follow-on question: \textit{where} should sensors be placed to maximally reduce posterior uncertainty? In the context of Bayesian data assimilation, there exist several techniques for optimal sensor placement that are both physics and data-driven \cite{hinson2015observability,manohar2018data,ma2025physense}. Several approaches rely on information theoretic approaches such as A/D/E optimal design which are optimization-based strategies that rely on assumptions for the shape of the posterior distribution \cite{chaloner1995bayesian}. Observability-based approaches based on control theory choose (typically fixed) sensor locations to maximize a metric of the observability Gramian \cite{bopardikar2021randomized,brace2022sensor}. When forecasting with projection-based reduced-order models, QR pivoting is a common approach that leverages the identified proper orthogonal decomposition modes from training data \cite{manohar2018data}. Discrete empirical interpolation methods also identify sensor placement strategies that identify points which are important for nonlinear term reconstruction \cite{kim2025towards}. However, most approaches for sensor placement tend to be fixed or require expensive online computations which reduces their utility for real-time data and model fusion.  To address this gap, we propose two approaches for optimal sensor placement using a trained diffusion model. The first approach relies on the fact that a diffusion model can be sampled to generate ensembles of forecasts. A diffusion ensemble is itself a spatially-resolved uncertainty field, making it a direct and principled guide for sensor placement - closing the loop between inference and observation in a unified active inference framework. However, uncertainty guided sensor placement suffers from increased computational costs for ensemble generation via sampling the diffusion model. To complement this, we propose a metamodeling strategy that uses \emph{another} model to predict the rollout error of the surrogate and to use spatial concentrations of this predicted error to sample new points efficiently. 

Based on these, our work delivers four contributions : \textbf{(i)} A multi-step autoregressive diffusion training objective over a $K$-step rollout that significantly improves long-horizon stability over single-step training. \textbf{(ii)} A multi-scale Graph Transformer diffusion architecture combining EDM preconditioning \cite{karras2022elucidating}, AdaLN-Zero conditioning \cite{yun2019graph}, and hierarchical voxel-grid pooling to forecast in unstructured meshes. \textbf{(iii)} Two adaptive sensor placement strategies: a learned predictive error network based estimator and a diffusion ensemble-variance-based estimator targeting observations where the generative prior has most uncertainty. \textbf{(iv)} A topology-aware greedy selection algorithm with spatial suppression that enforces minimum sensor separation, preventing redundant clustering in localized high-uncertainty regions. We validate the framework on two-dimensional forced homogeneous isotropic turbulence at $Re = 1000$  and flow over a backwards-facing step at $Re = 26{,}000$ on an unstructured high-dimensional finite-element mesh. The associated codes are available at \href{https://github.com/ISCLPurdue/chaos_gen}{https://github.com/ISCLPurdue/chaos\_gen}.

\subsection{Related Works}

Surrogate modeling replaces expensive high-fidelity simulations with computationally efficient approximation that reproduce the input-output behavior of a target system.  Classical methods such as Gaussian processes\cite{williams2006gaussian}, radial basis functions \cite{buhmann2000radial}, and polynomial response surfaces \cite{sacks1989design,forrester2008engineering} provide well-calibrated uncertainty estimates for low-dimensional systems, but their scalability rapidly degrades with the dimensionality of the problem \cite{razavi2012review}. Deep neural operator architectures have substantially extended the scope of tractable surrogate tasks: the Fourier Neural Operator \cite{li2020fourier} learns resolution-invariant solution
operators for parametric PDEs, while DeepONet \cite{lu2021learning} exploits the universal approximation theorem for operators to generalize across functional inputs. For time-dependent systems, recurrent architectures such as LSTM \cite{hochreiter1997long} and gated recurrent units \cite{cho2014learning} capture long-range temporal dependencies and have been applied to data-driven forecasting of high-dimensional chaotic attractors \cite{vlachas2018data,vlachas2020backpropagation}.  Specialized surrogate models\cite{geneva2022transformers,wu2024transolver,solera2024beta,li2023scalable} have further demonstrated competitive performance in reduced-order modeling of chaotic systems such as turbulent flows. Despite these advances, deterministic surrogates produce only point predictions and cannot represent the distributional uncertainty intrinsic to stochastic or chaotic dynamics. Diffusion models \cite{sohl2015deep,ho2020denoising,song2020score,karras2022elucidating} resolve this limitation by learning a reverse-time stochastic differential equation that maps Gaussian noise to complex high-dimensional distributions. Their superior mode coverage over GANs \cite{dhariwal2021diffusion} and natural extension to video and spatiotemporal sequences \cite{ho2022video} make them especially attractive for dynamical system surrogates. Some diffusion-inspired surrogate models, such as PDE-Refiner \cite{lippe2023pde}, further demonstrate that an iterative refinement process improves long-rollout and spectral accuracy relative to one-step neural solvers. Moreover, experiments in previous work have shown the advantages of applying generative diffusion models over traditional ML surrogate models for a variety of high dimensional dynamical systems \cite{gao2024generative,kohl2026benchmarking,oommen2024integrating,liu2024uncertainty,lienen2023zero,rasul2021autoregressive,gao2024bayesian,finn2024generative,yang2023denoising,molinaro2024generative}. In this work, we perform a theoretical comparison of long term error propagation for deterministic and generative surrogate models. 

Unstructured meshes and irregular topologies that arise naturally in computational fluid dynamics and structural mechanics have been tackled by Graph neural networks (GNNs) based surrogate models.  \citet{sanchez2020learning} established a foundational encoder-processor-decoder GNN capable of simulating complex multi-physics on particle and mesh representations.  \citet{pfaff2020learning} specialized this architecture to finite-element meshes, demonstrating orders-of-magnitude speed-up over high-fidelity solvers.  Message-passing neural PDE
solvers \cite{brandstetter2022message} and multi-scale GNN variants \cite{lino2022towards} further improve accuracy on flows with different spatial scales. Another direction of work shows that the combination of differentiable PDE solvers with GNN surrogates
\cite{belbute2020combining,kim2023generalizable} yields physically consistent predictions on
complex problems. Furthermore, GraphCast \cite{sanchez2024graphcast} uses a hierarchical graph transformer to achieve state-of-the-art global weather prediction, underscoring the scalability of graph-based surrogates. However, as in the previous case, deterministic GNN surrogates produce only point predictions and cannot capture the distributional spread of stochastic or turbulent states. Graph diffusion models have been shown to be successful in several specific tasks such as molecule synthesis \cite{liu2024graph,wang2023graph,kim2024diffusion}, protein folding \cite{yi2023graph}, weather forecasting \cite{price2023gencast}, and traffic and air quality prediction \cite{wen2023diffstg}. Sparse polynomial surrogates for diffusion processes on graphs \cite{d2025surrogate} provide convergence guaranties for least squares and compressed sensing recovery. Most directly relevant to this work, \citet{lino2025learning} proposed the Diffusion Graph Network (DGN), a latent diffusion model with a multi-scale GNN processor that samples equilibrium flow field distributions on unstructured meshes, and its SE(3)-equivariant predecessor \cite{valencia2024se} demonstrated that incorporating geometric symmetry into the score network significantly improves sample fidelity.

Probabilistic surrogates naturally interface with data-assimilation pipelines \cite{evensen2003ensemble, bocquet2019data, buizza2022data}, motivating the posterior-sampling perspective adopted here. Within this framework, the question of \textit{where} to observe is formalized by optimal experimental design (OED), which optimizes sensor placement with respect to information gain \cite{chaloner1995bayesian, ryan2016review}. For high-dimensional PDE systems, tractable placement strategies have been developed through sparse selection on POD bases, connecting compressed sensing to structured sensor placement \cite{manohar2018data}. Active learning extends this idea sequentially: as the surrogate improves, placement is iteratively refined by targeting regions of high posterior uncertainty \cite{settles2009active, gal2017deep}, directly motivating the adaptive strategy used here. Crucially, decoupling sensor placement from inference is suboptimal; co-optimizing sensor layouts with the inference procedure yields substantially tighter posteriors across a range of inverse problems \cite{siahkoohi2022deep, blanchard2021bayesian}. Separately, diffusion posterior sampling \cite{chung2022diffusion, rozet2023score} has emerged as a powerful reconstruction framework conditioning the reverse diffusion process on sparse observations. This paradigm has shown promise in sparse reconstruction and model-data fusion \cite{oommen2025learning, parikh2026d, chakraborty2025multimodal,huang2024diffusionpde}. Recent physics-informed diffusion models further embed PDE residuals as constraints during training \cite{bastek2024physics} or sampling \cite{zheng2025inversebench,jacobsen2025cocogen}, enabling data-efficient learning of physical systems and directly underpinning the adaptive diffusion posterior sampling approach proposed in this work. Furthermore, rather than treating sensor placement as a pre-processing step, a diffusion model enables uncertainty from the generative model to \textit{inform} placement, integrating the two into a unified inference loop, as shown further in this work. 

Some recent work has explored adaptive sensor placement for physical systems using generative and learning-based methods. \citet{zhao2025generative} and \citet{son2025diffusion} leverage graph diffusion policies and diffusion-attention mechanisms for adaptive combinatorial sensor placement in power systems and distributed parameter systems, respectively. \citet{vishwasrao2025diff} introduced Diff-SPORT, combining diffusion posterior sampling with Shapley-value attribution to rank and select sensor locations for urban turbulent flow reconstruction, although modeling temporal dynamics and out-of-distribution generalization remain open. \citet{ma2025physense} proposed PhySense, coupling a flow-based generative model with projected gradient descent to co-optimize reconstruction and sensor placement with theoretical guaranties. \citet{wang2023learning} trained a Transformer policy through actor-critic reinforcement learning to iteratively reallocate climate sensors, though without a generative prior for posterior-based field reconstruction. The costs of such added optimization and applications to high dimensional dynamical systems remain unexplored. In contrast, our work unifies autoregressive probabilistic forecasting, diffusion posterior sampling, and a topology-aware adaptive sensor placement in a single closed-loop during inference for high dimensional chaotic systems on unstructured meshes.

\section{Methods}
\label{sec:methods}

\subsection{Diffusion-based Generative Modeling}
Diffusion models function as generative priors by reversing a forward stochastic process that gradually degrades data structure until it resembles pure noise. We adopt the Elucidating Diffusion Model (EDM) framework \cite{karras2022elucidating}, which uses a "variance exploding" forward process. For a clean data sample $\mathbf{x}_0$ and a noise level $\sigma \in [\sigma_{\min}, \sigma_{\max}]$, the noising process is defined as:
\begin{equation}
    \mathbf{x}(\sigma) = \mathbf{x}_0 + \sigma \mathbf{n}, \quad \text{where} \quad \mathbf{n} \sim \mathcal{N}(\mathbf{0}, \mathbf{I}).
\end{equation}
During training, instead of directly predicting added noise, a preconditioned \cite{karras2022elucidating} neural network $F_{\theta}$ is trained to output the denoised data estimate $D_{\theta}(\mathbf{x}(\sigma); \sigma) \approx \mathbf{x}_0$ using a weighted mean squared error over the noise schedule:
\begin{equation}
    L(\theta) = \mathbb{E}_{\mathbf{x}_0, \mathbf{n}, \sigma} \left[ \lambda(\sigma) \| \mathbf{x}_0 - D_{\theta}(\mathbf{x}(\sigma); \sigma) \|^2 \right],
\label{eq:edm_loss}
\end{equation}
where the weighting function $\lambda(\sigma)$ ensures balance between different levels of noise. 

Once trained, $D_{\theta}$ is proportional to the score of the data distribution, $\nabla_{\mathbf{x}} \log p(\mathbf{x}(\sigma))$, effectively pointing towards the data manifold. This allows for deterministic sampling via the Probability Flow ODE \cite{song2020score, karras2022elucidating}:
\begin{equation}
    d\mathbf{x} = \frac{D_{\theta}(\mathbf{x}(\sigma); \sigma) - \mathbf{x}(\sigma)}{\sigma} d\sigma.
\end{equation}
Generation is performed by initializing $\mathbf{x}(\sigma_{\text{max}}) \sim \mathcal{N}(\mathbf{0}, \sigma_{\text{max}}^2 \mathbf{I})$ at a large noise level ($\sigma_{\text{max}}$) and integrating this ODE backward to $\sigma = 0$.

\subsection{Single and Multi-step Training}

\paragraph{Single-Step Denoising Loss.}
Following the EDM framework, noise levels are sampled per graph from a
log-normal distribution, $\ln\sigma \sim \mathcal{N}(P_\mu, P_\sigma^2)$,
and the training objective minimizes a noise-weighted mean squared error
over node features:
\begin{equation}
    \mathcal{L} = \mathbb{E}_{\sigma,\,\mathbf{n} \sim \mathcal{N}(\mathbf{0},\sigma^2\mathbf{I})}
    \left[
        \lambda(\sigma)
        \left\|
            D_\theta\!\left(\mathbf{x} + \mathbf{n};\, \sigma\right) - \mathbf{x}
        \right\|_F^2
    \right],
\end{equation}
where $\lambda(\sigma) = (\sigma^2 + \sigma_{data}^2)/(\sigma\,\sigma_{data})^2$ is the
EDM loss weighting that up-weights low-noise predictions, $\mathbf{n}$ is
node-wise Gaussian noise broadcast from the per-graph $\sigma$, and
$\|\cdot\|_F$ denotes the Frobenius norm over all nodes and features.

\paragraph{Multi-Step Autoregressive Loss.}
For stable and accurate forecasting over a horizon of $K$ timesteps, we train the model for multiple timesteps. Given an initial conditioning state $\mathbf{x}^{(0)}$
(e.g., the current observed field), the model predicts the expected value of the next state, which is then fed back as conditioning for the subsequent step. Concretely, we define the one-step prediction as
\begin{equation}\label{eq:one_step_prediction}
    \hat{\mathbf{x}}^{(k)} = D_\theta\!\left(\mathbf{x}^{(k)} + \mathbf{n}^{(k)};\,
    \sigma^{(k)},\, \hat{\mathbf{x}}^{(k-1)}\right),
\end{equation}
where $\mathbf{x}^{(k)}$ is the ground-truth target at rollout step $k$,
$\mathbf{n}^{(k)} \sim \mathcal{N}(\mathbf{0}, (\sigma^{(k)})^2\mathbf{I})$,
and $\hat{\mathbf{x}}^{(k-1)}$ is the denoised prediction from the previous
step used as the conditioning input (with $\hat{\mathbf{x}}^{(0)} = \mathbf{x}^{(0)}$).
The multi-step diffusion loss takes a weighted average (with weights $w(k)$) of the single-step EDM objective across the rollout:
\begin{equation}
    \mathcal{L}_{K} = \frac{1}{K}\sum_{k=1}^{K}\, w(k)\ 
    \mathbb{E}_{\sigma^{(k)},\,\mathbf{n}^{(k)}}\!\left[
        \lambda\!\left(\sigma^{(k)}\right)
        \left\|
            \hat{\mathbf{x}}^{(k)} - \mathbf{x}^{(k)}
        \right\|_F^2
    \right].
\end{equation}
Gradients are propagated through each step independently; the conditioning $\hat{\mathbf{x}}^{(k-1)}$ is detached from the computational graph between steps for making the model robust to error growth \cite{brandstetter2022message} and prevent gradient explosion over long rollouts \cite{chakraborty2024divide}.

\subsection{Conditional Generation}
To perform conditional generation given sparse or noisy measurements $\mathbf{y}$, we aim to sample from the posterior distribution $p(\mathbf{x}|\mathbf{y})$. For fixed conditions, for example the previous history of states in a forecasting problem, they can be directly inputted to the network $D_\theta$ like Equation \ref{eq:one_step_prediction}. However, for test time conditions like sparse observations, we employ a posterior sampling strategy. By applying Bayes' rule to the score function, posterior score can be divided into a prior term and a likelihood guidance term:
\begin{equation}
    \nabla_{\mathbf{x}(\sigma)} \log p(\mathbf{x}(\sigma)|\mathbf{y}) = \underbrace{\nabla_{\mathbf{x}(\sigma)} \log p(\mathbf{x}(\sigma))}_{\text{Prior (Model)}} + \underbrace{\nabla_{\mathbf{x}(\sigma)} \log p(\mathbf{y}|\mathbf{x}(\sigma))}_{\text{Likelihood (Guidance)}}.
\label{eq:posterior_score}
\end{equation}
Directly computing the likelihood $p(\mathbf{y}|\mathbf{x}(\sigma))$ is intractable because $\mathbf{x}(\sigma)$ is a noisy latent variable. Methods such as Diffusion Posterior Sampling (DPS) \cite{chung2022diffusion} approximate this term by estimating the clean data $\hat{\mathbf{x}}_0$ using Tweedie's formula, which is equivalent to the denoiser output: $\hat{\mathbf{x}}_0 \approx D_{\theta}(\mathbf{x}(\sigma); \sigma)$ in our case.

We adopt the Score-based Data Assimilation (SDA) approach \cite{rozet2023score}, which improves upon standard DPS by normalizing the gradients with a time-dependent variance estimate. This accounts for the diminishing information content of the measurements at high noise levels. The approximate likelihood gradient used to guide the solver is:
\begin{equation}
    \nabla_{\mathbf{x}(\sigma)} \log p(\mathbf{y}|\mathbf{x}(\sigma)) \approx - \nabla_{\mathbf{x}(\sigma)} \frac{\| \mathbf{y} - M(D_\theta(\mathbf{x}(\sigma); \sigma)) \|^2}{2(\Sigma_y + \sigma^2\hat{\Gamma})},
\label{eq:likelihood_score_edm}
\end{equation}
where $M$ is the measurement operator, $\Sigma_y$ is the measurement noise variance, and $\hat{\Gamma}$ is a scalar approximating the Jacobian-prior variance. We employ the Stochastic Posterior Sampler proposed by Chakraborty et al. \cite{chakraborty2025multimodal} to solve the reverse process.

\subsection{Graph Diffusion Architecture}\label{sec:graph_diffusion_arch}

We define a hierarchical graph network $F_\theta(\mathbf{x}(\sigma); \sigma)$, where the noisy state $\mathbf{x}(\sigma)$ consists of node features $\mathbf{X} \in \mathbb{R}^{N \times F}$. The architecture follows a U-Net design with symmetric encoder and decoder paths connected by skip connections, using a Graph Transformer backbone conditioned on the noise level $\sigma$.

\paragraph{Input Embedding and Noise Conditioning.}
Continuous node positions are lifted to a higher-dimensional representation via Gaussian Fourier projections,
\begin{equation}
    \mathbf{h}_{\mathrm{pos}} = \bigl[\sin(2\pi \mathbf{W}\mathbf{p})
    \;\|\; \cos(2\pi \mathbf{W}\mathbf{p})\bigr],
\end{equation}
where $\mathbf{W} \sim \mathcal{N}(0, s^2)$ is a fixed random matrix and $\mathbf{p} \in \mathbb{R}^d$ is a node position. The concatenation $[\mathbf{X} \| \mathbf{h}_{\mathrm{pos}}]$ is projected to the hidden dimension $C$ by a linear layer. Edge attributes are constructed from relative positions and Euclidean distances between connected nodes and projected to $\mathbb{R}^C$.

The noise level $\sigma$ is encoded via a sinusoidal embedding followed by an MLP to produce a conditioning vector $\mathbf{e}_\sigma \in \mathbb{R}^{d_t}$. This is injected into each block via Adaptive Layer Normalization with zero initialization (AdaLN-Zero):
\begin{equation}
    \mathrm{AdaLN}(\mathbf{h},\, \mathbf{e}_\sigma)
    = \bigl(1 + \mathbf{y}_s(\mathbf{e}_\sigma)\bigr)
      \odot \mathrm{LayerNorm}(\mathbf{h})
      + \mathbf{y}_b(\mathbf{e}_\sigma),
\end{equation}
where $\mathbf{y}_s$ and $\mathbf{y}_b$ are linear projections initialized to zero. The $(1 + \mathbf{y}_s)$ formulation ensures each block acts as an identity at the start of training.

\paragraph{Graph Transformer Block.}
Between encoder and decoder, two Graph Transformer blocks are applied at the coarsest resolution to allow global information exchange before upsampling begins. Each block applies AdaLN-conditioned Graph Transformer convolution followed by a position-wise feed-forward network (FFN), both with residual connections:
\begin{equation}
    \mathbf{h}_i^{(l+1)} = \mathbf{h}_i^{(l)}
    + \sum_{j \in \mathcal{N}(i)} \alpha_{ij}\, \mathbf{W}_v\, \mathbf{h}_j^{(l)},
\end{equation}
where the attention weights $\alpha_{ij}$ are computed from node and edge features. The FFN is applied after a second AdaLN modulation step, also with a residual.

\paragraph{Hierarchical Pooling and Unpooling.}
To capture multi-scale geometric structure, the encoder progressively coarsens the graph through voxel grid pooling, and the decoder restores resolution via distance-weighted interpolation.

\textit{Voxel grid pooling (downsampling).}
At each encoder level $\ell$ with voxel resolution $r_\ell$, nodes are assigned to spatial voxels and aggregated by mean pooling within each cluster $\mathcal{C}$:
\begin{equation}
    \mathbf{p}_{\mathrm{coarse}} = \frac{1}{|\mathcal{C}|}
    \sum_{j \in \mathcal{C}} \mathbf{p}_j, \qquad
    \mathbf{h}_{\mathrm{coarse}} = \frac{1}{|\mathcal{C}|}
    \sum_{j \in \mathcal{C}} \mathbf{h}_j.
\end{equation}
Edges on the coarsened graph are rebuilt by connecting all node pairs within a radius of $r_\ell\sqrt{2} \cdot 1.5$.

\textit{KNN interpolation (upsampling).}
Features are propagated from coarse to fine resolution using inverse-distance weighting over the $k$ nearest coarse neighbors $\mathcal{N}_k(\mathbf{p}_f)$ of each fine node:
\begin{equation}
    \mathbf{h}_f = \sum_{c \in \mathcal{N}_k(\mathbf{p}_f)}
    \frac{w_c}{\sum_m w_m}\,\mathbf{h}_c, \qquad
    w_c = \|\mathbf{p}_f - \mathbf{p}_c\|_2^{-1}.
\end{equation}
The interpolated features are concatenated with the corresponding encoder skip features and fused by a linear projection before the subsequent Graph Transformer block.

\paragraph{Output and Preconditioning.}
A global long-skip connection adds the projected input to the final decoder output, preserving low-frequency content. The result is passed through Layer Normalization and a linear output projection to produce the denoised node features $\hat{\mathbf{x}}$. The raw network output $F_\theta$ is then wrapped with noise-level-dependent scaling using an EDM preconditioner\cite{karras2022elucidating}:
\begin{equation}
    D_\theta(\mathbf{x};\sigma) =
    c_{\mathrm{skip}}(\sigma)\,\mathbf{x}
    + c_{\mathrm{out}}(\sigma)\,
      F_\theta\!\bigl(c_{\mathrm{in}}(\sigma)\,\mathbf{x};\,
                      c_{\mathrm{noise}}(\sigma)\bigr),
\end{equation}
where $c_{\mathrm{skip}} = \sigma_d^2/(\sigma^2+\sigma_d^2)$,
$c_{\mathrm{out}} = \sigma\sigma_d/\sqrt{\sigma^2+\sigma_d^2}$,
$c_{\mathrm{in}} = 1/\sqrt{\sigma^2+\sigma_d^2}$, and
$c_{\mathrm{noise}} = \tfrac{1}{4}\ln\sigma$,
with $\sigma_d$ denoting the data standard deviation; this rescales inputs and outputs so the network operates in a unit-variance regime across all noise levels. This preconditioning helps to mitigate some of the limitations of a variance exploding diffusion model.

\subsection{Sensor Placement}\label{sec:sensor_placement}
Optimal sensor placement is critical for maximizing information gain during data assimilation. To condition the posterior diffusion generation on measurements, we implement three distinct sensor placement strategies. Each strategy selects a fixed budget of $s$ sensors. All strategies include locations that reside strictly within the spatial boundaries of the domain.

\paragraph{Random Placement}
Under the random sensor placement strategy, sensor locations are sampled uniformly at random from the pool of eligible interior points. This serves as a baseline for comparing other sensor placement strategies.

\paragraph{Predictive Model-based Placement}
We utilize a learned, adaptive strategy that positions sensors in spatial "hot spots" where the pre-trained diffusion model $D_{\theta}$ exhibits the highest reconstruction error from the prior, meaning it is least capable of generating the correct values without observation.

To quantify this, we define a ground-truth error field, $\mathcal{E}_{gt}$, as the difference between the ground truth and the expected value of the diffusion forecast. This is defined at each node. Since in our diffusion framework, the network directly predicts the expected forecast when denoising a pure noise sample ($\mathbf{x}(\sigma_{\text{max}}) = \mathbf{x}_0 + \sigma_{\text{max}}\mathbf{n}$), this spatial error map is computed as:
\begin{equation}\label{eq:pred_network}
\mathcal{E}_{gt} = \frac{1}{C}
\left\| \mathbf{x}_0 - D_{\theta}(\mathbf{x}(\sigma_{\max}); \sigma_{\max}) \right\|_2^2
\end{equation}

where the norm is computed over the channel dim only. Since $\mathbf{x}_0$ is unknown at inference time, we train a secondary predictor network, $P_{\phi}$, to estimate this error field based on available conditioning or previous states $\mathbf{x}_{\text{cond}}$. The predictor $P_{\phi}$ shares the base architecture of the generative model excluding the noise level conditioning, and minimizes the Huber loss $\mathcal{H}$ against the ground truth error:
\begin{equation}\label{eq:pred_network_label}
    \mathcal{L}(\phi) = \mathcal{H}(P_{\phi}(\mathbf{x}_{\text{cond}}) - \mathcal{E}_{gt}).
\end{equation}
During inference, the predicted error map $\hat{\mathcal{E}} = P_{\phi}(\mathbf{x}_{\text{cond}})$ guides the observation points $\mathbf{S}$ construction via a topology-aware greedy strategy:
\begin{enumerate}
    \item \textbf{Initialization:} Define the set of candidate spatial locations, excluding domain boundaries.
    \item \textbf{Greedy Selection:} Iteratively select the location $i^*$ with the highest predicted error: $i^* = \text{argmax}_{i} (\hat{\mathcal{E}}_i)$.
    \item \textbf{Topology-aware Suppression:} To ensure broad spatial coverage and prevent sensor clustering, suppress the error values of all candidate locations within a defined local neighborhood (e.g., a spatial radius or structural adjacency distance) of $i^*$.
    \item \textbf{Iteration:} Repeat steps 2 and 3 until the measurement budget is exhausted.
\end{enumerate}

\paragraph{Uncertainty-Driven Placement}
Rather than relying on a parameterized error network, the uncertainty-driven strategy leverages the inherent variance of an active diffusion ensemble. At each sampling step, we compute predictions across $E$ parallel ensemble members. Let $\mathbf{x}^{(e)}$ represent the predicted state of the $e$-th ensemble member. We calculate the local predictive uncertainty $u_i$ as the mean standard deviation across the $C$ feature channels:
\begin{equation}\label{uq_sensor_placement}
    u_i = \frac{1}{C} \sum_{c=1}^{C} \text{Std}\left(\{x_{i,c}^{(1)}, x_{i,c}^{(2)}, \dots, x_{i,c}^{(E)}\}\right)
\end{equation}
Sensors are then placed at the locations exhibiting the highest uncertainty $u_i$, using the identical greedy selection algorithm and spatial suppression applied in the predictive method. Although this method does not have any initial cost like training the error model, it requires an ensemble to be generated at every step to estimate the sensor locations.

\section{Error Analysis}
\subsection{Autoregressive Surrogates for Stochastic Dynamical Systems}\label{sec:error_analysis}

We show that the rate of error propagation in a surrogate model depends on the per-step error, which is non-reducible for a deterministic surrogate, and an exponential growth part which depends on the surrogate's underlying smoothness. 

\paragraph{Setup.}
An interesting property of chaotic systems is that although having a deterministic character, they are often treated as a stochastic system due to their extreme sensitivity to initial conditions \cite{deco1997determining}. Note also, that stochasticity may also arise from unresolved physics, partial observability and model-form uncertainty. The observed or measured states may then be treated as samples from an evolving probability density function \cite{nie2015reconstruction, botvinick2025measure}. Let the true system evolve according to a stochastic Markov kernel $\mathbf{x}_t \sim \mathcal{T}(\cdot \mid \mathbf{x}_{t-1})$, where $\mathcal{T}(\cdot\mid\mathbf{x}) \in \mathcal{P}(\mathbb{R}^n)$ is an arbitrary
conditional distribution (no assumptions on structure or modality). Let $\mu_t$ denote the marginal of $\mathbf{x}_t$. Since the ground truth comes from a distribution, we measure the prediction error using the 2-Wasserstein distance $\mathcal{E}_t = W_2(\mu_t, \hat{\mu}_t)$, where $\hat{\mu}_t$ is the surrogate marginal at step $t$. For two probability measures $\mu, \nu \in \mathcal{P}_2(\mathbb{R}^n)$ with finite second moments, the 2-Wasserstein distance is defined as
\begin{equation}\label{eq:w2_def}
    W_2(\mu, \nu) \;=\; \left( \inf_{\pi \in \Pi(\mu, \nu)} \int_{\mathbb{R}^n \times \mathbb{R}^n} \|\mathbf{x} - \mathbf{y}\|^2 \, \mathrm{d}\pi(\mathbf{x}, \mathbf{y}) \right)^{1/2},
\end{equation}
where $\Pi(\mu, \nu)$ denotes the collection of all joint probability measures (couplings) on $\mathbb{R}^n \times \mathbb{R}^n$ with marginals $\mu$ and $\nu$. Now, we define the \emph{intrinsic spread} of $\mathcal{T}$,
\begin{equation}\label{eq:spread}
    V^* \;=\; \sup_{\mathbf{x}}\;\inf_{\mathbf{y}\in\mathbb{R}^n}\;
    W_2\!\left(\mathcal{T}(\cdot\mid\mathbf{x}),\;\delta_\mathbf{y}\right),
\end{equation}
as the best-possible $W_2$ distance from $\mathcal{T}(\cdot|\mathbf{x})$ to any point mass, where $\delta_\mathbf{y}$ denotes the Dirac measure centered at $\mathbf{y}$. Because the 2-Wasserstein distance to a Dirac mass is minimized exactly at the distribution's mean \cite{molinaro2024generative}, the inner infimum evaluates to the square root of the total variance:
\begin{equation*}
    \inf_{\mathbf{y}\in\mathbb{R}^n} W_2^2\!\left(\mathcal{T}(\cdot\mid\mathbf{x}),\;\delta_\mathbf{y}\right) \;=\; \mathbb{E}_{\mathbf{z}\sim\mathcal{T}(\cdot\mid\mathbf{x})}\!\left[\|\mathbf{z} - \mathbb{E}[\mathbf{z}]\|^2\right] \;=\; \mathrm{Tr}(\mathrm{Var}(\mathcal{T}(\cdot\mid\mathbf{x}))).
\end{equation*}
This makes $V^*$ a fundamental property of the true dynamics: $V^*>0$ for any non-degenerate kernel (where the variance is strictly positive for at least one state $\mathbf{x}$) and is naturally large when $\mathcal{T}$ is multimodal.

Now, we reintroduce the surrogate models used to approximate $\mathcal{T}$. Let $S_\theta : \mathbb{R}^n \to \mathbb{R}^n$ denote a deterministic surrogate mapping and $p_\theta(\cdot \mid \mathbf{x}) \in \mathcal{P}_2(\mathbb{R}^n)$ denote a probabilistic surrogate transition kernel, both parameterized by $\theta$. We compare two surrogate classes where $\mathbf{x}'$ is a free dummy variable.
\begin{itemize}
    \item \textbf{Deterministic}: $\hat{\mathbf{x}}_t = S_\theta(\hat{\mathbf{x}}_{t-1})$,
    with Lipschitz constant
    $L_{det} = \sup_{\mathbf{x}\neq\mathbf{x}'}
    \|S_\theta(\mathbf{x})-S_\theta(\mathbf{x}')\|/\|\mathbf{x}-\mathbf{x}'\|$.

    \item \textbf{Probabilistic}: $\hat{\mathbf{x}}_t \sim p_\theta(\cdot\mid
    \hat{\mathbf{x}}_{t-1})$, with Wasserstein-Lipschitz constant
    $L_{prob} = \sup_{\mathbf{x}\neq\mathbf{x}'}
    W_2(p_\theta(\cdot|\mathbf{x}), p_\theta(\cdot|\mathbf{x}'))/
    \|\mathbf{x}{-}\mathbf{x}'\|$.
\end{itemize}
Both surrogates share the initial condition $\hat{\mu}_0 = \mu_0$, so $\mathcal{E}_0 = 0$.

\subsection*{One-Step Error}

The one-step mismatch of each surrogate is the $W_2$ distance between the true kernel and the surrogate kernel, evaluated at the \emph{same} conditioning state:
\begin{align}
    \epsilon_{det}  &\;=\; \sup_{\mathbf{x}}\;
        W_2\!\left(\mathcal{T}(\cdot\mid\mathbf{x}),\;
        \delta_{S_\theta(\mathbf{x})}\right), \label{eq:eps_det}\\
    \epsilon_{prob} &\;=\; \sup_{\mathbf{x}}\;
        W_2\!\left(\mathcal{T}(\cdot\mid\mathbf{x}),\;
        p_\theta(\cdot\mid\mathbf{x})\right). \label{eq:eps_prob}
\end{align}
Here, the deterministic prediction $S_\theta(\mathbf{x})$ is lifted to a probability measure via the Dirac delta $\delta_{S_\theta(\mathbf{x})}$ to compute the Wasserstein distance \cite{molinaro2024generative}, whereas $p_\theta(\cdot\mid\mathbf{x})$ is inherently a valid distribution. By the definition of $V^*$ and as shown above, the deterministic mismatch satisfies $\epsilon_{det} \geq V^* > 0$ for \emph{any} $S_\theta$. The probabilistic mismatch satisfies $\epsilon_{prob} \to 0$ as $p_\theta \to \mathcal{T}$: that is, it can be made arbitrarily small given sufficient model capacity and optimal training \cite{molinaro2024generative}.

\subsection*{Autoregressive Error Bound}

\begin{proposition}\label{prop:main}
For a chaotic surrogate model with a one-step error $\epsilon \in \{\epsilon_{det}, \epsilon_{prob}\}$ and the Wasserstein-Lipschitz constant $L>1$, where $L \in \{L_{det}, L_{prob}\}$, the prediction error satisfies
\begin{equation}\label{eq:bound}
    \mathcal{E}_t \;\leq\;
    \frac{e^{\Lambda T} - 1}{\Lambda}\;\bar\epsilon,
    \qquad \Lambda = \frac{\ln L}{\Delta t},
    \quad \bar\epsilon = \frac{\epsilon}{\Delta t},
    \quad T = t\Delta t.
\end{equation}
\end{proposition}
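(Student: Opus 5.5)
The plan is to derive a one-step recursion $\mathcal{E}_t \le L\,\mathcal{E}_{t-1} + \epsilon$ by a triangle inequality in $W_2$, unroll it into a geometric sum, and then rewrite that sum in continuous-time variables.

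\emph{Step 1: the recursion.} We have $\mu_t = \int \mathcal{T}(\cdot\mid\mathbf{x})\,\mathrm{d}\mu_{t-1}(\mathbf{x})$ and $\hat\mu_t = \int p(\cdot\mid\mathbf{x})\,\mathrm{d}\hat\mu_{t-1}(\mathbf{x})$. In the deterministic case $p(\cdot\mid\mathbf{x}) = \delta_{S_\theta(\mathbf{x})}$ and $L_{det}$ plays the role of $L_{prob}$. Introduce the intermediate measure $\nu_t = \int p(\cdot\mid\mathbf{x})\,\mathrm{d}\mu_{t-1}(\mathbf{x})$, which applies the surrogate kernel to the true marginal. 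The triangle inequality gives
\begin{equation*}
\mathcal{E}_t \le W_2(\mu_t,\nu_t) + W_2(\nu_t,\hat\mu_t).
\end{equation*}

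\emph{Step 2: the two terms.} For the first term, both measures are mixtures over the same $\mu_{t-1}$. Gluing, for each $\mathbf{x}$, an optimal coupling of $\mathcal{T}(\cdot\mid\mathbf{x})$ and $p(\cdot\mid\mathbf{x})$ (via a measurable selection) gives
\begin{equation*}
W_2^2(\mu_t,\nu_t) \le \int W_2^2\bigl(\mathcal{T}(\cdot\mid\mathbf{x}),p(\cdot\mid\mathbf{x})\bigr)\,\mathrm{d}\mu_{t-1} \le \epsilon^2,
\end{equation*}
using the definitions \eqref{eq:eps_det}--\eqref{eq:eps_prob}. For the second term, take an optimal coupling $\pi$ of $(\mu_{t-1},\hat\mu_{t-1})$ and glue to each pair $(\mathbf{x},\mathbf{x}')$ an optimal coupling of $p(\cdot\mid\mathbf{x})$ and $p(\cdot\mid\mathbf{x}')$. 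This yields
\begin{equation*}
W_2^2(\nu_t,\hat\mu_t) \le \int L^2\|\mathbf{x}-\mathbf{x}'\|^2\,\mathrm{d}\pi = L^2\mathcal{E}_{t-1}^2.
\end{equation*}
Together these give $\mathcal{E}_t \le L\mathcal{E}_{t-1} + \epsilon$.

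\emph{Step 3: unrolling and conversion.} Since $\mathcal{E}_0=0$, induction gives
\begin{equation*}
\mathcal{E}_t \le \epsilon\sum_{k=0}^{t-1}L^k = \epsilon\,\frac{L^t-1}{L-1}.
\end{equation*}
Now substitute $L^t = e^{t\ln L} = e^{\Lambda T}$ with $\Lambda = \ln L/\Delta t$ and $T = t\Delta t$. It remains to bound $1/(L-1)$. Because $L>1$, the inequality $e^x-1 \ge x$ gives
\begin{equation*}
L-1 = e^{\Lambda\Delta t}-1 \ge \Lambda\Delta t,
\end{equation*}
so
\begin{equation*}
\epsilon\,\frac{L^t-1}{L-1} \le \frac{\epsilon}{\Delta t}\cdot\frac{e^{\Lambda T}-1}{\Lambda} = \frac{e^{\Lambda T}-1}{\Lambda}\,\bar\epsilon,
\end{equation*}
which is \eqref{eq:bound}. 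The hypothesis $L>1$ is used exactly here: it makes $\Lambda>0$ and the denominators positive.

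\emph{Main obstacle.} The delicate step is Step 2: building the glued couplings rigorously and making sure the Lipschitz constant $L_{prob}$ transfers from point masses to mixtures. This requires measurable selection of optimal couplings, which is available because optimal plans depend measurably on the marginals, together with finite second moments of all the measures involved. If measurability becomes awkward, I would fall back on the convexity of $W_2^2$ under mixtures with a common mixing measure. That gives the first inequality directly, and the second follows by disintegrating $\pi$.
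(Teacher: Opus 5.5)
Your proof is correct and follows the paper's route: a coupling/triangle-inequality argument giving $\mathcal{E}_t \le L\,\mathcal{E}_{t-1} + \epsilon$. You then unroll it into the geometric sum $\epsilon(L^t-1)/(L-1)$ and convert via $L^t = e^{\Lambda T}$ and $L-1 \ge \Lambda\Delta t$, as the paper does. Your intermediate measure $\nu_t$ and the squared-$W_2$ mixture (convexity) bound are a more careful version of the paper's step, which writes the triangle inequality pointwise in the conditioning state and then bounds $\mathbb{E}[L\|\mathbf{x}_{t-1}-\hat{\mathbf{x}}_{t-1}\|]$ by Cauchy--Schwarz.
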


\begin{proof}
Let $(\mathbf{x}_{t-1}, \hat{\mathbf{x}}_{t-1})$ be an optimal $W_2$-coupling of $(\mu_{t-1}, \hat{\mu}_{t-1})$.  Drawing $\mathbf{x}_t \sim \mathcal{T}(\cdot| \mathbf{x}_{t-1})$ and $\hat{\mathbf{x}}_t$ from the surrogate kernel, coupled optimally given $(\mathbf{x}_{t-1}, \hat{\mathbf{x}}_{t-1})$, yields a valid (suboptimal) coupling of $(\mu_t, \hat{\mu}_t)$.  The $W_2$ triangle inequality then gives
\begin{equation}\label{eq:recursion}
    \mathcal{E}_t \;\leq\;
    \underbrace{W_2\!\left(\mathcal{T}(\cdot\mid\mathbf{x}_{t-1}),\;
    p_\theta(\cdot\mid\mathbf{x}_{t-1})\right)}_{\leq\;\epsilon}
    \;+\;
    \underbrace{W_2\!\left(p_\theta(\cdot\mid\mathbf{x}_{t-1}),\;
    p_\theta(\cdot\mid\hat{\mathbf{x}}_{t-1})\right)}_{\leq\;L\,\mathcal{E}_{t-1}},
\end{equation}
where the first term uses the definition of $\epsilon$ and the second uses the 
Wasserstein-Lipschitz property pointwise: $W_2(p_\theta(\cdot|\mathbf{x}), 
p_\theta(\cdot|\mathbf{x}')) \leq L\|\mathbf{x} - \mathbf{x}'\|$, 
which upon integrating against the optimal coupling of $(\mu_{t-1}, \hat{\mu}_{t-1})$ 
and applying the Cauchy-Schwarz inequality gives 
$\mathbb{E}[L\|\mathbf{x}_{t-1} - \hat{\mathbf{x}}_{t-1}\|] \leq L\,\mathcal{E}_{t-1}$. We get the recursion $\mathcal{E}_t \leq L\,\mathcal{E}_{t-1} + \epsilon$ with
$\mathcal{E}_0 = 0$.  Unrolling over $t$ steps (similar to \citet{chakraborty2024note}),
\begin{equation}
    \mathcal{E}_t \;\leq\; \epsilon\sum_{k=0}^{t-1}L^k
    \;=\; \epsilon\,\frac{L^t - 1}{L-1}.
\end{equation}
Writing $L^t = e^{\Lambda T}$ and bounding $L - 1 \geq \Lambda\Delta t$ (since $e^x - 1 \geq x$) yields Equation~\eqref{eq:bound}. The $L>1$ is typically justified for chaotic systems \cite{eckmann1985ergodic,abraham2004lyapunov}. (Note : In the case where $L = 1$ (i.e., $\Lambda = 0$), the bound reduces via L'H\^{o}pital's rule to $\mathcal{E}_t \leq T\bar\epsilon = t\epsilon$, showing linear error growth.)
\end{proof}

Applying Proposition~\ref{prop:main} to each surrogate gives the two explicit bounds:
\begin{equation}\label{eq:det_final}
    \mathcal{E}_t^{det} \;\leq\;
    \frac{e^{\Lambda_{det} T}-1}{\Lambda_{det}}\;\bar\epsilon_{det},
    \qquad \bar\epsilon_{det} \;\geq\; \bar{V}^* \; = \frac{V^*}{\Delta t} \;>\; 0,
\end{equation}
\begin{equation}\label{eq:prob_final}
    \mathcal{E}_t^{prob} \;\leq\;
    \frac{e^{\Lambda_{prob} T}-1}{\Lambda_{prob}}\;\bar\epsilon_{prob},
    \qquad \bar\epsilon_{prob} \;\to\; 0
    \text{ as } p_\theta \to \mathcal{T}.
\end{equation}

\textbf{Remark:} The two bounds differ in both factors.  The per-step error $\bar\epsilon_{det}$ is bounded away from zero by $\bar{V}^*$ for any deterministic model, while $\bar\epsilon_{prob}$ is fully reducible.  The growth rates $\Lambda_{det}$ and $\Lambda_{prob}$ reflect the smoothness of each surrogate model (similar to the Jacobian of the hybrid solver in \citet{chakraborty2024note}), respectively. In a typical scenario where both models can approximately capture the underlying dynamical systems, these growth rates should be very similar. This can make the per-step error a determining factor for a long-term accurate surrogate model. Further neural architecture research in this area can be guided towards minimizing these error terms.

\subsection{Diffusion Posterior Sampling with Sparse Sensors}
\label{sec:dps_error_analysis}

We show that for an arbitrary prior $p(\mathbf{x}_0)$ with $\mathbb{E}[\|\mathbf{x}_0\|^2]<\infty$, the reconstruction error equals the expected posterior uncertainty, adding sensors always reduces this error, and that placing the next sensor at the highest-uncertainty node maximizes the guaranteed minimum per-step reduction. We also connect these optimal strategies with our heuristic sensor placement strategies in Section \ref{sec:sensor_placement}.

\paragraph{Setup.}
Consider an observed realization of the ground-truth $\mathbf{x}_0\in\mathbb{R}^{N}$ with $\mathbb{E}[\|\mathbf{x}_0\|^2]<\infty$ on a mesh of $N$ nodes (considering a single variable for simplicity). A configuration of $s\leq N$ sensors is encoded by a linear measurement operator $M:\mathbb{R}^{N}\to\mathbb{R}^{s}$, selecting rows indexed by $\mathcal{S}_s\subset\{1,\dots,N\}$. Observations follow
\begin{equation}
    \mathbf{y}_s = M\mathbf{x}_0 + \boldsymbol{\eta}, \qquad
    \boldsymbol{\eta}\sim\mathcal{N}(\mathbf{0},\sigma_\eta^2\mathbf{I}_s),
\end{equation}
where $\sigma_\eta^2>0$. Let $\hat{\mathbf{x}}_s=\mathbb{E}[\mathbf{x}_0\mid\mathbf{y}_s]$
be the MMSE (Minimum mean Square Error) estimate and
\begin{equation}
    \mathcal{E}(s)
    \;=\;\mathbb{E}\bigl[\|\mathbf{x}_0-\hat{\mathbf{x}}_s\|^2\bigr]
\label{eq:mse_general}
\end{equation}
the mean-squared reconstruction error.

\paragraph{MSE equals expected posterior uncertainty.}
Since $\hat{\mathbf{x}}_s=\mathbb{E}[\mathbf{x}_0\mid\mathbf{y}_s]$, the residual $\mathbf{x}_0-\hat{\mathbf{x}}_s$ is the posterior fluctuation
around the conditional mean.  Writing $\|\mathbf{v}\|^2=\operatorname{Tr}(\mathbf{v}\mathbf{v}^\top)$ and interchanging trace and expectation,
\[
    \mathcal{E}(s)
    \;=\;\operatorname{Tr}\!\Bigl(
        \mathbb{E}\bigl[(\mathbf{x}_0-\hat{\mathbf{x}}_s)
                        (\mathbf{x}_0-\hat{\mathbf{x}}_s)^\top\bigr]
    \Bigr).
\]
Applying the tower property,
$\mathbb{E}[(\mathbf{x}_0-\hat{\mathbf{x}}_s)(\mathbf{x}_0-\hat{\mathbf{x}}_s)^\top]
 =\mathbb{E}[\operatorname{Var}(\mathbf{x}_0\mid\mathbf{y}_s)]$
by definition of conditional covariance, we obtain
\begin{equation}
    \mathcal{E}(s)
    \;=\;\operatorname{Tr}\!\Bigl(
        \mathbb{E}\bigl[\operatorname{Var}(\mathbf{x}_0\mid\mathbf{y}_s)\bigr]
    \Bigr)
    \;=\;\mathbb{E}\!\Bigl[
        \operatorname{Tr}\!\bigl(\operatorname{Var}(\mathbf{x}_0\mid\mathbf{y}_s)\bigr)
    \Bigr].
\label{eq:mse_trace}
\end{equation}
The MSE is therefore the \emph{expected total posterior variance}, in conjunction with the A-optimality criterion of classical optimal experimental design \citep{chaloner1995bayesian,manohar2018data,ma2025physense}.

\begin{proposition}[Monotone reduction with sensor count]
\label{prop:monotone}
Let $\mathcal{S}_1\subset\mathcal{S}_2\subset\cdots\subset\{1,\ldots,N\}$ be a nested sequence of index sets, where $M_{s+1}$ extends $M_s$ by adding a single sensor at node $i$.  Then
\begin{equation}
    \Delta_i
    \;:=\;\mathcal{E}(s)-\mathcal{E}(s+1)
    \;=\;\mathbb{E}\!\Bigl[
        \operatorname{Tr}\!\bigl(
            \operatorname{Var}(\mathbf{x}_0\mid\mathbf{y}_s)
           -\operatorname{Var}(\mathbf{x}_0\mid\mathbf{y}_{s+1})
        \bigr)
    \Bigr]
    \;=\;\mathbb{E}\bigl[\|\hat{\mathbf{x}}_{s+1}-\hat{\mathbf{x}}_s\|^2\bigr]
    \;\geq\;0,
\label{eq:reduction_trace}
\end{equation}
so $\mathcal{E}(s+1)\leq\mathcal{E}(s)$ for all $s<N$.
\end{proposition}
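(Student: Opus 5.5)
The plan is to read the three equalities as consequences of the $L^2$ orthogonality of conditional expectations. Everything rests on the nesting $\sigma(\mathbf{y}_s)\subseteq\sigma(\mathbf{y}_{s+1})$. Because $M_{s+1}$ extends $M_s$ by one row, we may write $\mathbf{y}_{s+1}=(\mathbf{y}_s,\,y_i)$ with $y_i=x_{0,i}+\eta_{s+1}$. We take the noise components to be shared across configurations, which is what makes the sequence of observations nested rather than redrawn. Then $\mathbf{y}_s$ is a measurable function of $\mathbf{y}_{s+1}$. The tower property gives $\hat{\mathbf{x}}_s=\mathbb{E}[\hat{\mathbf{x}}_{s+1}\mid\mathbf{y}_s]$, so $(\hat{\mathbf{x}}_s)_s$ is a martingale in $L^2$; this uses $\mathbb{E}\|\mathbf{x}_0\|^2<\infty$.

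The first equality follows from subtracting two copies of \eqref{eq:mse_trace} and using linearity of expectation and trace. For the second, decompose
\[
\mathbf{x}_0-\hat{\mathbf{x}}_s=(\mathbf{x}_0-\hat{\mathbf{x}}_{s+1})+(\hat{\mathbf{x}}_{s+1}-\hat{\mathbf{x}}_s),
\]
expand the squared norm, and take expectations. The key step is to show that the cross term vanishes. The vector $\hat{\mathbf{x}}_{s+1}-\hat{\mathbf{x}}_s$ is $\sigma(\mathbf{y}_{s+1})$-measurable and square integrable. Conditioning on $\mathbf{y}_{s+1}$ then gives
\[
\mathbb{E}\bigl[(\mathbf{x}_0-\hat{\mathbf{x}}_{s+1})^\top(\hat{\mathbf{x}}_{s+1}-\hat{\mathbf{x}}_s)\bigr]
=\mathbb{E}\bigl[\mathbb{E}[\mathbf{x}_0-\hat{\mathbf{x}}_{s+1}\mid\mathbf{y}_{s+1}]^\top(\hat{\mathbf{x}}_{s+1}-\hat{\mathbf{x}}_s)\bigr]=0.
\]
This yields the Pythagorean identity $\mathcal{E}(s)=\mathcal{E}(s+1)+\mathbb{E}\|\hat{\mathbf{x}}_{s+1}-\hat{\mathbf{x}}_s\|^2$, which is the second equality. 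Nonnegativity follows immediately, so $\mathcal{E}(s+1)\le\mathcal{E}(s)$.

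The only delicate point is integrability and measurability. We need every term to be finite so that the expansion and the conditioning step are legitimate. Jensen's inequality gives $\mathbb{E}\|\hat{\mathbf{x}}_s\|^2\le\mathbb{E}\|\mathbf{x}_0\|^2<\infty$, and Cauchy--Schwarz then makes the cross term integrable. We must also state the nesting assumption explicitly, since the statement encodes the sensors only through $M_s$. The Gaussian noise with $\sigma_\eta^2>0$ is not actually needed for the identity. It only guarantees that the posterior is well defined for arbitrary priors, so no assumption on the prior's shape enters.
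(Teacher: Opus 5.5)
Your proof is correct and is essentially the paper's argument. The paper invokes the matrix law of total variance conditional on $\mathbf{y}_s$, takes trace and expectation, and then uses the tower property $\mathbb{E}[\hat{\mathbf{x}}_{s+1}\mid\mathbf{y}_s]=\hat{\mathbf{x}}_s$ to rewrite $\mathbb{E}[\operatorname{Tr}\operatorname{Var}(\hat{\mathbf{x}}_{s+1}\mid\mathbf{y}_s)]$ as $\mathbb{E}\|\hat{\mathbf{x}}_{s+1}-\hat{\mathbf{x}}_s\|^2$. You instead derive the same Pythagorean identity directly, killing the cross term by conditioning on $\mathbf{y}_{s+1}$; this is exactly the computation that underlies the law of total variance.

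Your version is a bit more self-contained: it makes explicit the integrability and the shared-noise nesting that the paper uses implicitly when it says $\mathbf{y}_s$ is a sub-vector of $\mathbf{y}_{s+1}$. Your side remark that $\sigma_\eta^2>0$ is needed for the posterior to be well defined is not quite right, since conditional expectations exist regardless of the noise, but nothing in your proof depends on it.
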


\begin{proof}
Since $\mathbf{y}_s$ is a sub-vector of $\mathbf{y}_{s+1}$, the matrix law of total variance applied to $(\mathbf{x}_0,\mathbf{y}_s,\mathbf{y}_{s+1})$ gives
\begin{equation}
    \operatorname{Var}(\mathbf{x}_0\mid\mathbf{y}_s)
    \;=\;
    \mathbb{E}\bigl[\operatorname{Var}(\mathbf{x}_0\mid\mathbf{y}_{s+1})\mid\mathbf{y}_s\bigr]
    \;+\;
    \operatorname{Var}(\hat{\mathbf{x}}_{s+1}\mid\mathbf{y}_s).
\label{eq:matrix_lotv}
\end{equation}
Both summands on the right are positive-semidefinite.  Taking trace and
unconditional expectations and applying Equation \ref{eq:mse_trace} at both stages:
\[
    \mathcal{E}(s)
    \;=\;\mathcal{E}(s+1)
    \;+\;\mathbb{E}\!\bigl[
        \operatorname{Tr}\!\bigl(\operatorname{Var}(\hat{\mathbf{x}}_{s+1}\mid\mathbf{y}_s)\bigr)
    \bigr].
\]
By the tower property,
$\mathbb{E}[\hat{\mathbf{x}}_{s+1}\mid\mathbf{y}_s]
 =\mathbb{E}[\mathbb{E}[\mathbf{x}_0\mid\mathbf{y}_{s+1}]\mid\mathbf{y}_s]
 =\mathbb{E}[\mathbf{x}_0\mid\mathbf{y}_s]=\hat{\mathbf{x}}_s$,
so
\[
    \mathbb{E}\!\bigl[
        \operatorname{Tr}\!\bigl(\operatorname{Var}(\hat{\mathbf{x}}_{s+1}\mid\mathbf{y}_s)\bigr)
    \bigr]
    \;=\;\mathbb{E}\bigl[\|\hat{\mathbf{x}}_{s+1}-\hat{\mathbf{x}}_s\|^2\bigr]
    \;\geq\;0,
\]
which gives Equation \ref{eq:reduction_trace}.
\end{proof}

\paragraph{Justification for uncertainty-guided placement.}
We write $u_i^2 := \operatorname{Var}(x_{0,i})$ for the prior marginal variance at node~$i$ (for example, estimated directly from the diffusion ensemble without sensors). We show that~$u_i$ is the right criterion for the first placement and a reasonable proxy for subsequent placements.

\medskip\noindent\textit{One-step guarantee.}\;
At any stage~$s$, let the expected posterior variance be denoted by $v_i^2:=\mathbb{E}[\operatorname{Var}(x_{0,i}\mid\mathbf{y}_s)]$.
By the scalar law of total variance applied to each coordinate~$j$,
\[
    v_j^2
    \;=\;\mathbb{E}\!\bigl[\operatorname{Var}(x_{0,j}\mid\mathbf{y}_{s+1})\bigr]
    \;+\;\mathbb{E}\!\bigl[\operatorname{Var}(\hat{x}_{s+1,j}\mid\mathbf{y}_s)\bigr],
\]
so every term $v_j^2-\mathbb{E}[\operatorname{Var}(x_{0,j}\mid\mathbf{y}_{s+1})]\geq 0$. Since $\Delta_i=\sum_{j=1}^{N}\bigl(v_j^2 -\mathbb{E}[\operatorname{Var}(x_{0,j}\mid\mathbf{y}_{s+1})]\bigr)$ (expanding the trace in Equation \ref{eq:reduction_trace}) is a sum of non-negative terms that dominates any single one,
\begin{equation}\label{eq:delta_lb}
    \Delta_i
    \;\geq\;v_i^2 - \mathbb{E}\!\bigl[\operatorname{Var}(x_{0,i}\mid\mathbf{y}_{s+1})\bigr].
\end{equation}
For fixed~$\mathbf{y}_s$, by definition the MMSE of~$x_{0,i}$ from $y_{s+1,i}=x_{0,i}+\eta_i$ cannot be worse than the linear (Wiener-filter) estimator~\citep{steven1993fundamentals}, whose conditional MSE equals $\sigma_\eta^2 V_s/(\sigma_\eta^2+V_s)$ with $V_s=\operatorname{Var}(x_{0,i}\mid\mathbf{y}_s)$:
\[
    \mathbb{E}\!\bigl[\operatorname{Var}(x_{0,i}\mid\mathbf{y}_{s+1})\mid\mathbf{y}_s\bigr]
    \;\leq\;\frac{\sigma_\eta^2 V_s}{\sigma_\eta^2+V_s}.
\]
Taking unconditional expectations and applying Jensen's inequality to the concave map $k\mapsto\sigma_\eta^2 k/(\sigma_\eta^2+k)$ (whose second derivative is $-2\sigma_\eta^4/(\sigma_\eta^2+k)^3<0$):
\[
    \mathbb{E}\!\bigl[\operatorname{Var}(x_{0,i}\mid\mathbf{y}_{s+1})\bigr]
    \;\leq\;\mathbb{E}\!\left[\frac{\sigma_\eta^2 V_s}{\sigma_\eta^2+V_s}\right]
    \;\leq\;\frac{\sigma_\eta^2 v_i^2}{\sigma_\eta^2+v_i^2},
\]
and substituting into~\eqref{eq:delta_lb} gives
\begin{equation}\label{eq:delta_bound}
    \Delta_i
    \;\geq\;\frac{v_i^4}{\sigma_\eta^2+v_i^2}.
\end{equation}
The right-hand side is zero if and only if $v_i=0$ and strictly increases for $v_i>0$, so $\arg\max_{i\notin\mathcal{S}_s} v_i$ \textit{maximizes the minimum guaranteed reduction in error}. There are several additional optimization techniques that can be performed during online implementation to increase "information-gain" with added sensors \cite{manohar2018data, clark2020multi, karnik2024constrained, chaloner1995bayesian}. These come with added computational costs magnified for high-dimensional systems with a long time horizon of interest. 

In the first step ($s=0$), the proxy is exact ($v_i=u_i$). Placing the first sensor at~$\arg\max_i u_i$ is guaranteed to maximize the minimum reduction in error. For $s\geq 1$ the bound requires the posterior quantity~$v_i$, which our method approximates by the prior~$u_i$. Ideally, the posterior uncertainty can also be computed by generating a new ensemble after placing each sensor, but this is computationally very expensive for a large number of sensors.

Two properties \emph{motivate} this heuristic substitution:
\begin{enumerate}
\item \emph{Conservativeness.}\;
  By the law of total variance, $u_i^2 = v_i^2 + \operatorname{Var}(\mathbb{E}[x_{0,i}\mid\mathbf{y}_s]) \geq v_i^2$, so a node with low prior variance necessarily has low posterior variance, yields a small guaranteed~$\Delta_i$ via~\eqref{eq:delta_bound}, and is not prioritized correctly.
\item \emph{Suppression distance.}\;
  The topology-aware selection (Section~\ref{sec:sensor_placement}) removes all candidates within a gap ~$g$ of existing sensors. Intuitively, the posterior variance is likely to be reduced relative to the prior for the nodes near a sensor.  Among the remaining eligible nodes, $v_i\approx u_i$ and the prior ranking is a reasonable proxy for the posterior ranking.
\end{enumerate}

\paragraph{Learned error predictor as a lightweight alternative.}
The identity $\mathbb{E}[(x_{0,i}-\hat{x}_{s,i})^2]=\mathbb{E}[\operatorname{Var}(x_{0,i}\mid\mathbf{y}_s)]$
(from Equation \ref{eq:mse_trace}) means that any estimator of the per-node squared reconstruction error is a proxy for the posterior variance governing placement. During training, the spatial squared error (Equation ~\ref{eq:pred_network})
provides a node-level proxy for the uncertainty in the reconstruction. Because the diffusion network is conditioned on prior noise rather than sensor observations, the error prediction network is a heuristic approximation of the ensemble variance used in uncertainty-guided placement. This proxy is compatible with the suppression-distance strategy and provides a computationally inexpensive way to approximate uncertainty-guided placement without running an ensemble.

\section{Results}
\subsection{Case 1: Two-dimensional turbulence}
\begin{figure}[!ht]
    \centering
    \begin{minipage}{0.8\textwidth}
        \centering
        \includegraphics[width=\linewidth]{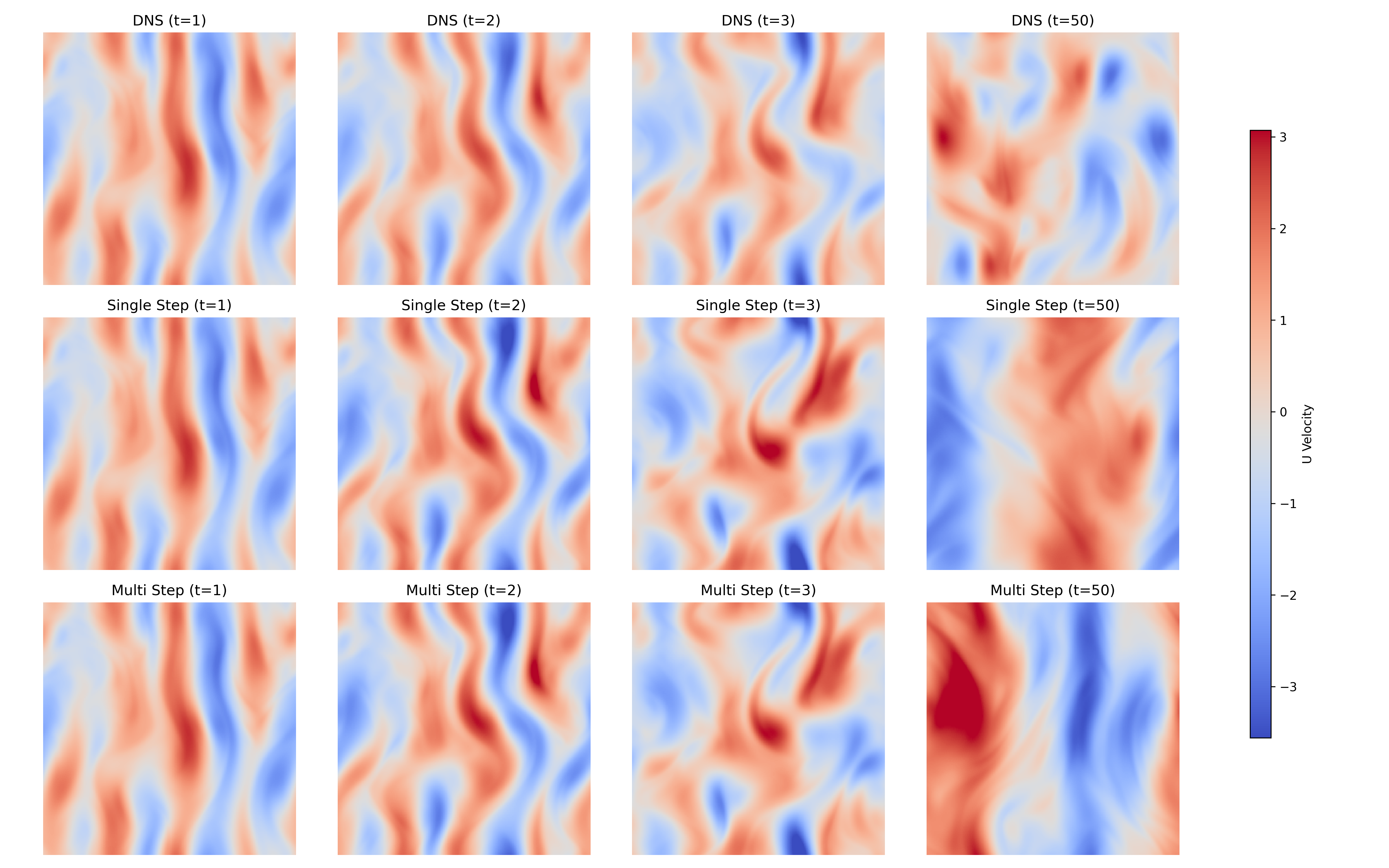}
        \caption{U velocity at different timesteps of forecast for single-step vs multi-step diffusion training along with the DNS ground truth.}
        \label{fig:turbgen_plots/single_vs_multi}
    \end{minipage}
    \hfill 
    \begin{minipage}{0.6\textwidth}
        \centering
        \includegraphics[width=\linewidth]{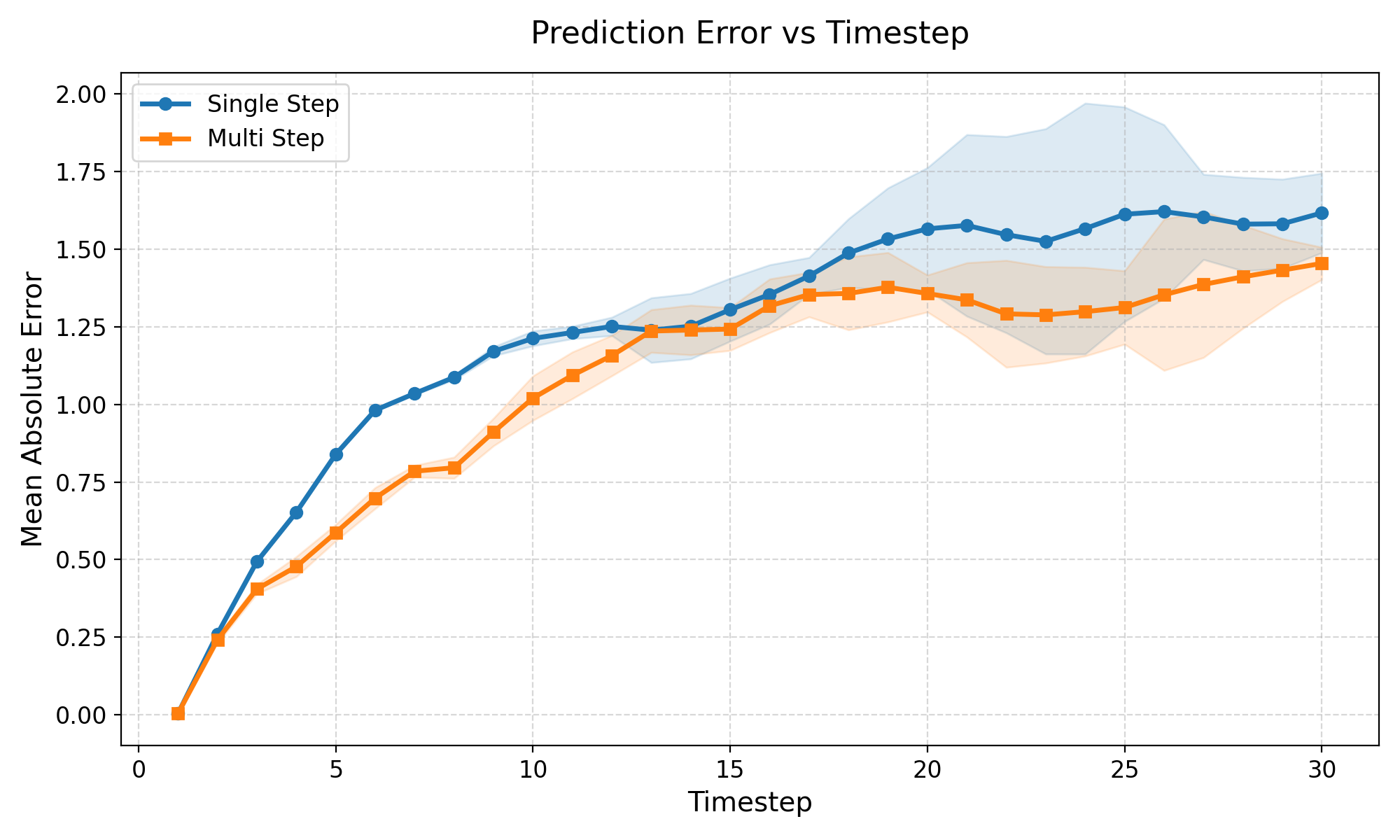}
        \caption{Mean absolute error from DNS at different timesteps of forecast for single-step vs multi-step diffusion training.}
        \label{fig:turbgen_plots/single_vs_multi_error}
    \end{minipage}
\end{figure}

\begin{figure}[!ht]
    \centering
    \begin{minipage}[t]{0.8\textwidth}
        \vspace{0pt} 
        \centering
        \includegraphics[width=\linewidth, trim=2in 2.5in 1in 1.5in, clip]{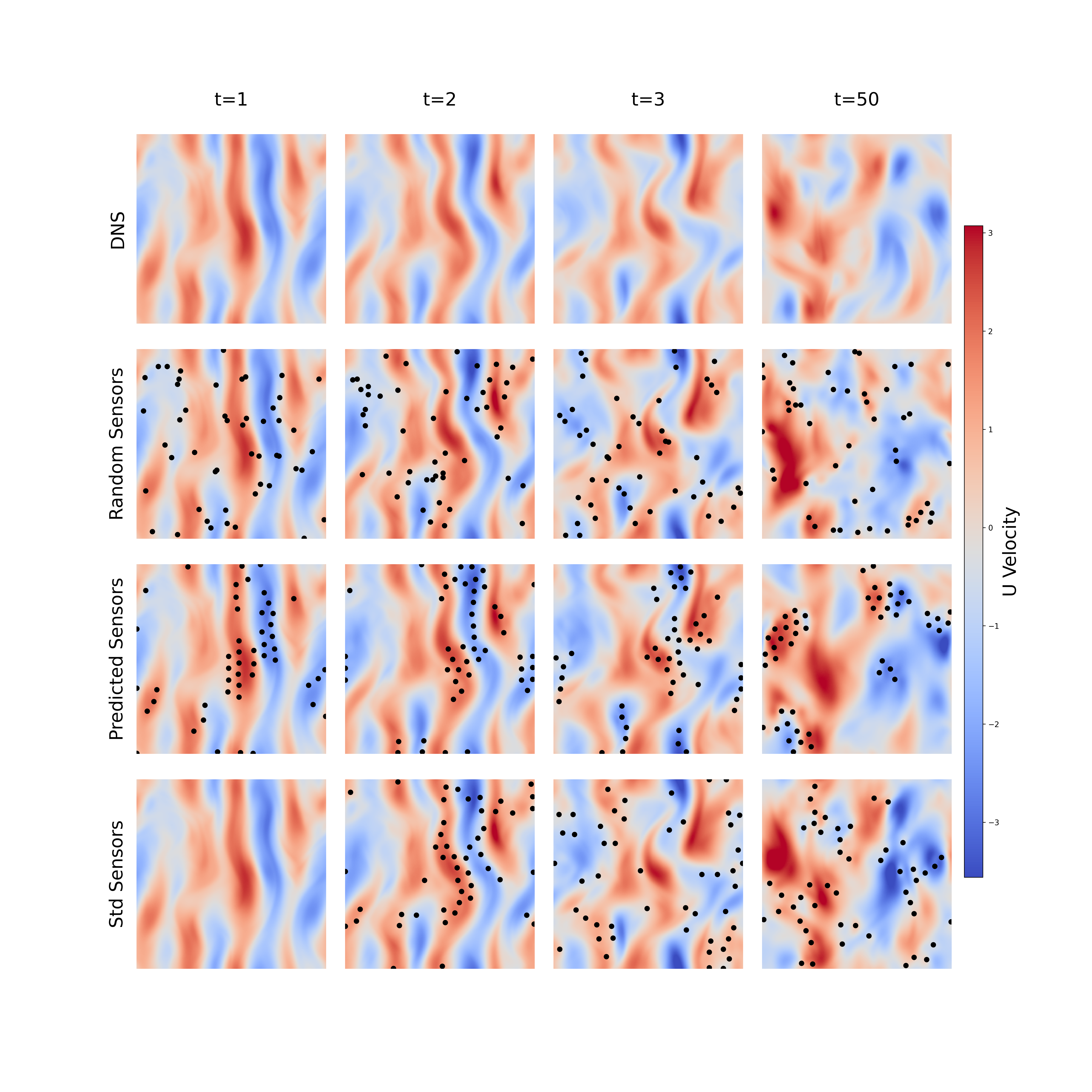}
        \caption{U velocity at different timesteps of forecast for different sensor placement techniques. The sensor locations are shown as black dots.}
        \label{fig:turbgen_plots/different_sampling}
    \end{minipage}
    \hfill 
    \begin{minipage}[t]{0.6\textwidth}
        \vspace{0pt} 
        \centering
        \includegraphics[width=\linewidth]{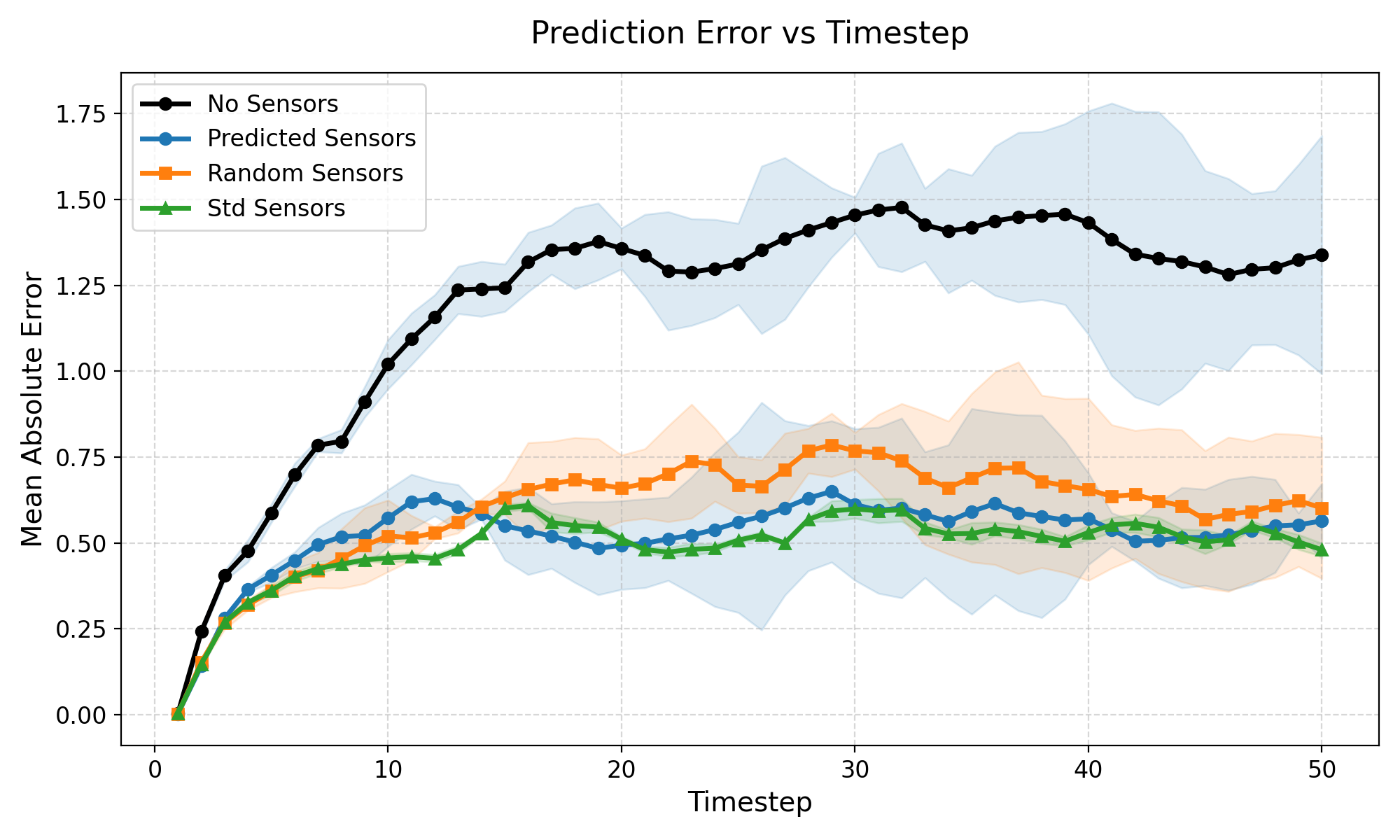}
        \vspace{-0.5cm}
        \caption{Mean absolute error from DNS at different timesteps of forecast for different sensor placement techniques. Sensor placements improve the forecast. These values are for 50 sensor points with a minimum distance of 15 grid points between them.}
        \label{fig:turbgen_plots/different_sampling_error}
    \end{minipage}
\end{figure}
The forced two-dimensional turbulence serves as a widely adopted benchmark in the forecasting of chaotic dynamical systems~\citep{stachenfeld2021learned,schiff2024dyslim,frerix2021variational}. We utilize a 2D homogeneous isotropic turbulent flow subjected to Kolmogorov forcing for building our diffusion forecast surrogate model. This system is dictated by the incompressible Navier-Stokes equations:

\begin{equation}
\begin{aligned}
\frac{\partial \mathbf{u}}{\partial t} + \nabla \cdot (\mathbf{u} \otimes \mathbf{u}) &= \frac{1}{Re} \nabla^2 \mathbf{u} - \frac{1}{\rho} \nabla p + \mathbf{f}, \\
\nabla \cdot \mathbf{u} &= 0,
\end{aligned}
\label{eqn:NS_Equations}
\end{equation}

where $\mathbf{u} = (u, v)$ denotes the velocity field, $p$ stands for the pressure, $\rho$ indicates the fluid density, and $Re$ represents the Reynolds number. The term $\mathbf{f}$ introduces the external forcing mechanism, formulated as:

\begin{equation}
\mathbf{f} = A \sin(ky)\hat{\mathbf{e}} - r\mathbf{u},
\end{equation}

where $\hat{\mathbf{e}}$ is the unit vector aligned with the $x$-axis. Following the configuration established by \citep{shankar2023differentiable}, our experimental setup employs an amplitude of $A = 1$, a wavenumber of $k = 4$, a linear drag coefficient of $r = 0.1$, and a Reynolds number of $Re = 1000$. The simulations are initialized using a stochastically generated divergence-free velocity field~\citep{Kochkov2021-ML-CFD}.

\begin{figure}[!ht]
    \centering
    \includegraphics[width=\linewidth]{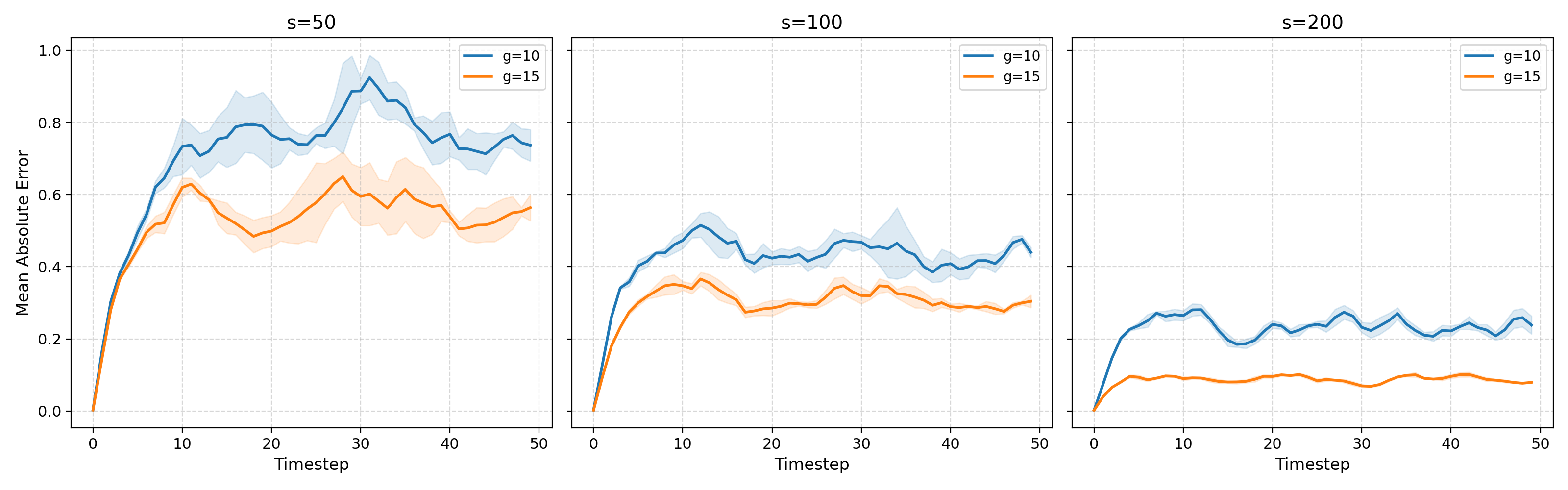}
    \caption{Plots for the mean absolute error from DNS at different timesteps of forecast for different gap between($g$) and number of sensor points($s$). More sensor points and larger gaps between sensors lead to lower prediction error.}
    \label{fig:turbgen_plots/points_gap_ablation}
\end{figure}

To establish our ground truth datasets, we rely on direct numerical simulations (DNS)~\citep{kochkov2021machine} applied to the governing equations. These simulations are conducted over a doubly periodic square spatial domain measuring $L = 2\pi$. The spatial domain is initially discretized on a uniform $512 \times 512$ grid, which is subsequently spatially filtered to a coarser $64 \times 64$ resolution. We extract temporal trajectory data only after the flow has fully transitioned into its chaotic regime. To guarantee meaningful variations between successive observations, flow snapshots are recorded at intervals of $T = 256\Delta t_{DNS}$. For a more comprehensive breakdown of the dataset generation pipeline, readers are directed to~\citep{shankar2023differentiable}. The model and training hyperparameters are mentioned in Appendix \ref{sec:hyperparams}.

Firstly we compare the performance of single-step vs multi-step trained diffusion models. Figure \ref{fig:turbgen_plots/single_vs_multi} shows that both surrogate models can capture the ground truth reasonably well in shorter timespans and then diverge due to chaos. However, Figure \ref{fig:turbgen_plots/single_vs_multi_error} clearly shows that the multi-step diffusion model has a lower error. Therefore, we used the multi-step diffusion model hereafter for the rest of our experiments.

\begin{figure}[!ht]
    \centering
    \includegraphics[width=\linewidth]{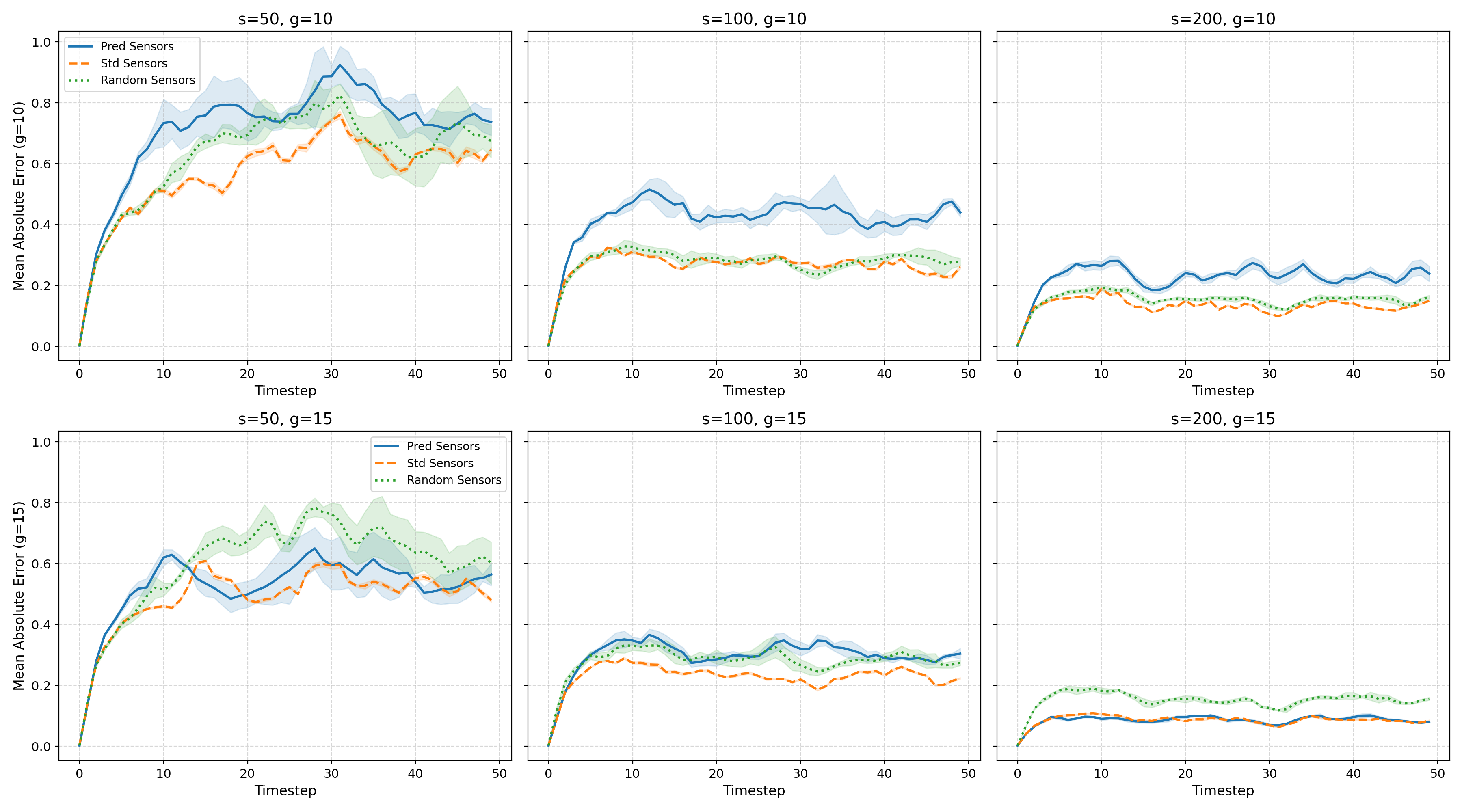}
    \caption{Plots for comparing different sensor placement techniques and their mean absolute error from DNS at different timesteps of forecast for different distance between(d) and number of sensor points($s$). Standard-deviation-based sensor placement gives lowest error.}
    \label{fig:turbgen_plots/points_gap_mask_ablation}
\end{figure}

We also show the different sensor placement techniques in Figure \ref{fig:turbgen_plots/different_sampling}. It is evident that both the predicted sensors and the standard deviation based sensors select the points based on the coherent structures of the flow field. Beyond identifying coherent structures, Figure \ref{fig:turbgen_plots/different_sampling}  also shows that the selected sensor locations evolve with time, indicating that the proposed strategy is genuinely adaptive rather than tied to a fixed set of spatial hot spots. As a quantitative assessment, we plot the mean absolute error of the surrogate forecast from the ground truth in Figure \ref{fig:turbgen_plots/different_sampling_error}. The standard deviation based sensor placement seems to perform the best among the three sensor placement techniques, although marginally in this case. We expect this because it is a homogeneous isotropic case where each point in the field is no different from any other point. Moreover, from Figure \ref{fig:turbgen_plots/points_gap_ablation} we conclude that increasing the number of sensors  points($s$) and increasing the gap($g$) between those sensors reduces the forecast error. Secondly, we observe in Figure \ref{fig:turbgen_plots/points_gap_mask_ablation} that when the gap is less ($g$=10) for more sensor points ($s$=100,200), the sensor placement using the predictive-model-based approach performs worse. This is because the sensors cluster in regions of high error, and even a random sensor placement can recover more information.

\begin{figure}[!ht]
  \centering
  \includegraphics[width=\linewidth]{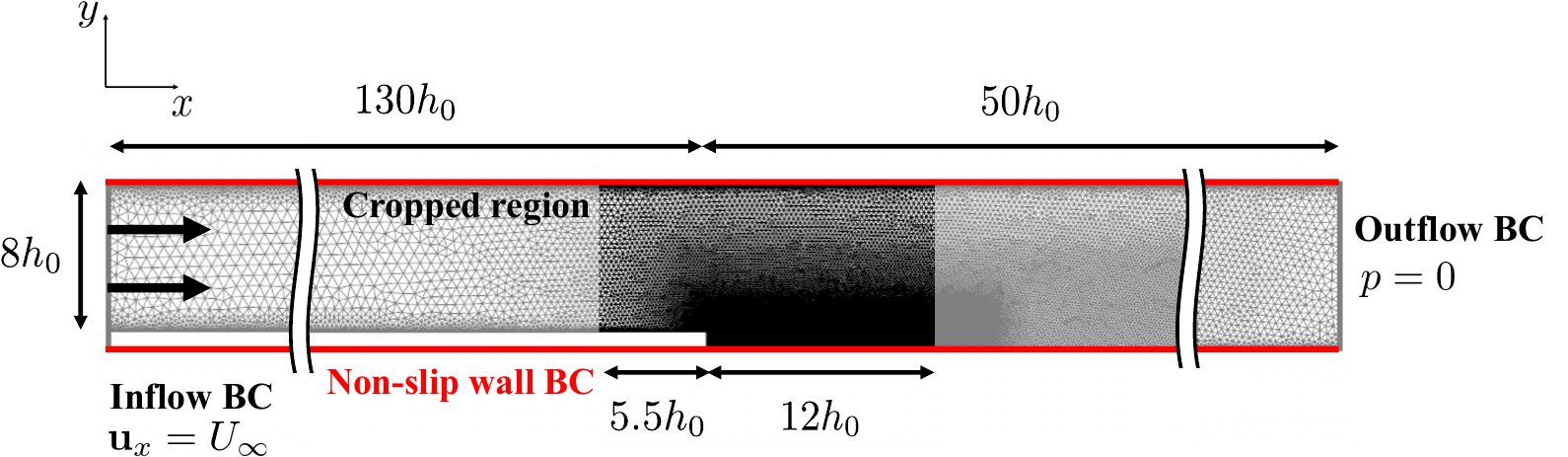}
  \caption{Visualization of the two-dimensional backwards-facing step computational domain, including boundary conditions and the cropped region used for training}
  \label{fig:domain_bfs}
\end{figure}

\subsection{Case 2: Flow-over a backwards facing step}

\begin{figure}[!ht]
    \centering
    \begin{minipage}[t]{\textwidth}
        \vspace{0pt} 
        \centering
        \includegraphics[width=\linewidth]{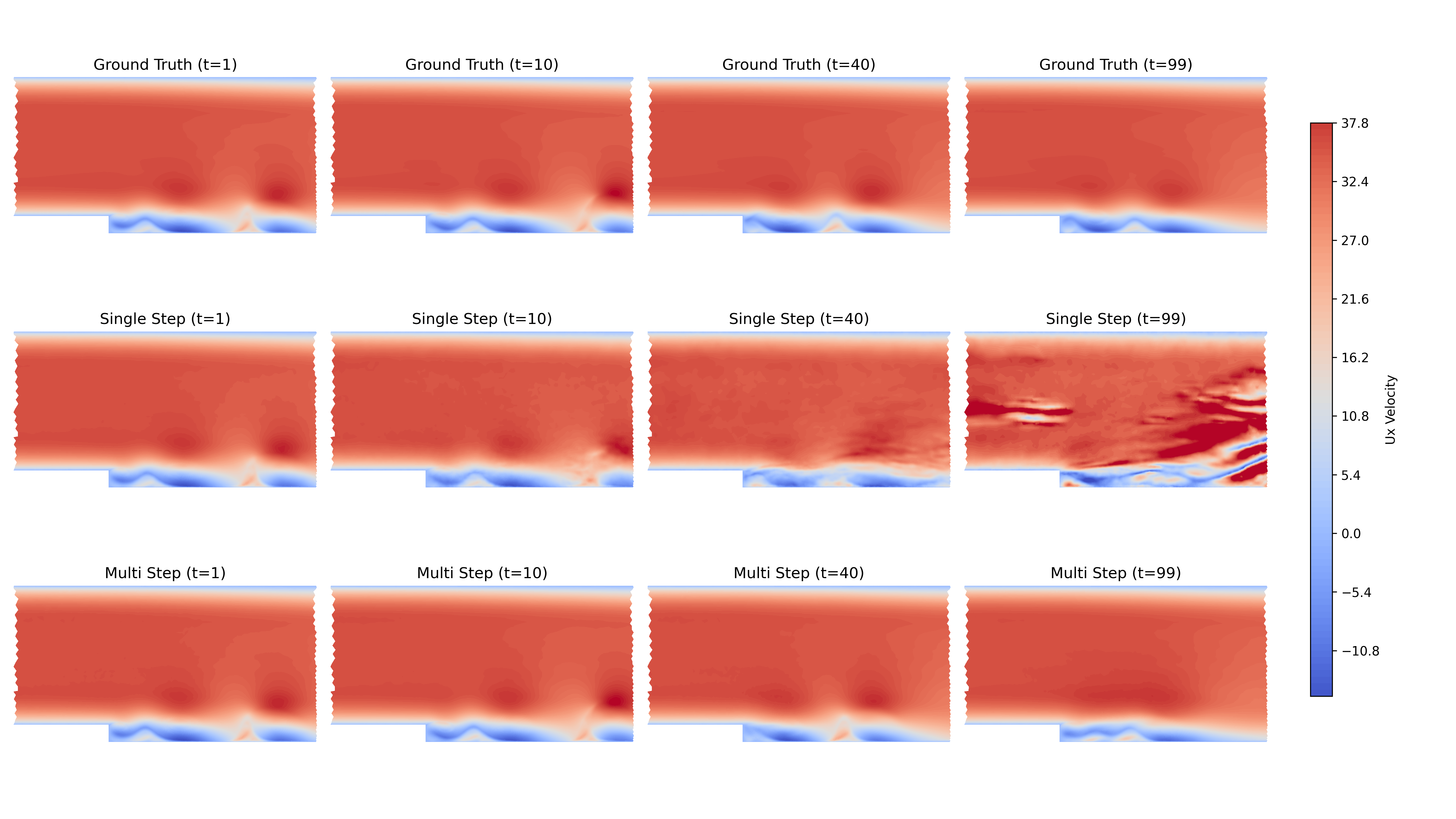}
        \vspace{-1cm}
        \caption{U velocity plots at different timesteps of forecast for single-step vs multi-step diffusion training along with the ground truth.}
        \label{fig:bfs_plots/single_vs_multi}
    \end{minipage}
    \hfill 
    \begin{minipage}[t]{0.7\textwidth}
        \vspace{0pt} 
        \centering
        \includegraphics[width=\linewidth]{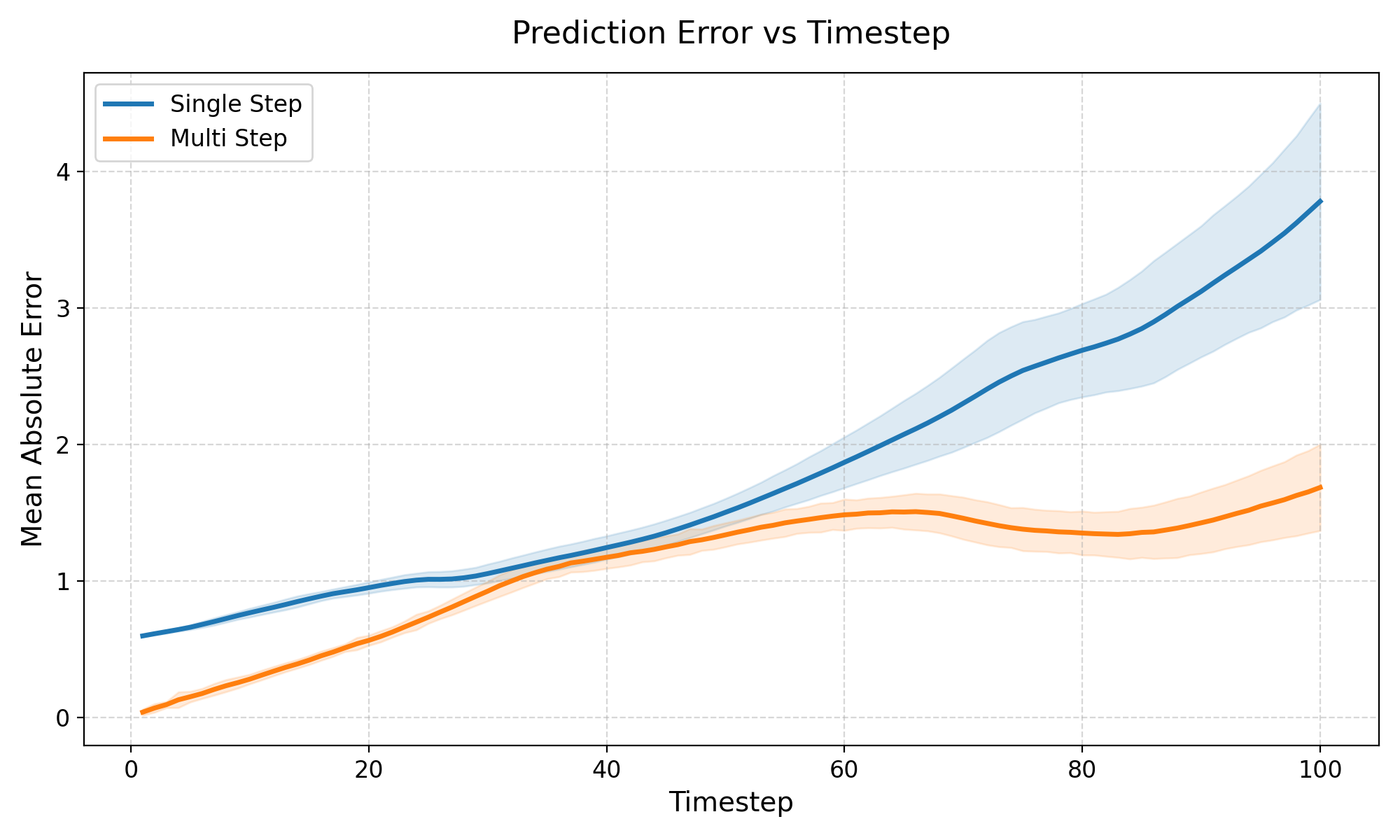}
        \caption{Mean absolute error from ground truth at different timesteps of forecast for single-step vs multi-step diffusion training.}
        \label{fig:bfs_plots/single_vs_multi_error}
    \end{minipage}
\end{figure}

Flow over a backwards-facing step (BFS) is a widely studied benchmark problem in fluid dynamics, encompassing both experimental and computational approaches \cite{armaly1983experimental}. BFS flow is characterized by flow separation at the step and subsequent reattachment downstream, resulting in a recirculation region near the step and transient vortex shedding. These phenomena are prevalent in more complex, non-idealized flows and pose challenges in accurate modeling. Therefore, BFS serves as a valuable case study for evaluating modeling approaches. Similar to the forced two-dimensional turbulence case, BFS flow is governed by the incompressible Navier-Stokes equations as in Eq. \ref{eqn:NS_Equations}. A schematic of the computational domain is presented in Fig. \ref{fig:domain_bfs}.

Based on this configuration, a large eddy simulation (LES) under the Reynolds number, $Re = U_{\infty}h_{step}/\nu = 26,000$, is performed using the finite element method (FEM) to generate the ground truth dataset, and the Smagorinsky model \cite{SMAGORINSKY1963} with $C_s = 0.06$ is used for the subgrid-scale stress (SGS) model. For the FEM solver, FEniCSx \cite{baratta2023dolfinx} is used. The total grid for the two-dimensional BFS consists of 5,253 triangular Taylor-hood elements with local refinement near the step and the walls. Even though the simulation is run with the full domain, to efficiently reduce the training time and GPU memory usage, only the flowfield in the cropped subdomain near the step anchor is used as the training dataset as in Fig. \ref{fig:domain_bfs}. The timestep for the simulation is set to $\Delta t = 1 \times 10^{-4}s$, which is sufficient to keep the CFL number less than 1. Snapshots of the velocity field are saved every $10^{-4}$ seconds, and a total of 800 snapshots are generated. The first 100 steps are discarded to remove initial transient effects, and the remaining 700 steps are used as the training dataset. Due to the unstructured nature of this problem we utilize the graph neural network architecture mentioned in Section \ref{sec:graph_diffusion_arch}. Further details on the model, training and sampling hyperparameters are mentioned in Appendix \ref{sec:hyperparams}.

\begin{figure}[!ht]
    \centering
    \begin{minipage}[t]{0.8\textwidth}
        \vspace{0pt} 
        \centering
        \includegraphics[width=\linewidth, trim=3in 2in 2.2in 1in, clip]{Figures/bfs_plots/different_sampling.png}
        \caption{U velocity plots at different timesteps of forecast for different sensor placement techniques. The sensor locations are shown as black dots.}
        \label{fig:bfs_plots/different_sampling}
    \end{minipage}
    \hfill 
    \begin{minipage}[t]{0.6\textwidth}
        \vspace{0pt} 
        \vspace{0.7cm}
        \centering
        \includegraphics[width=\linewidth]{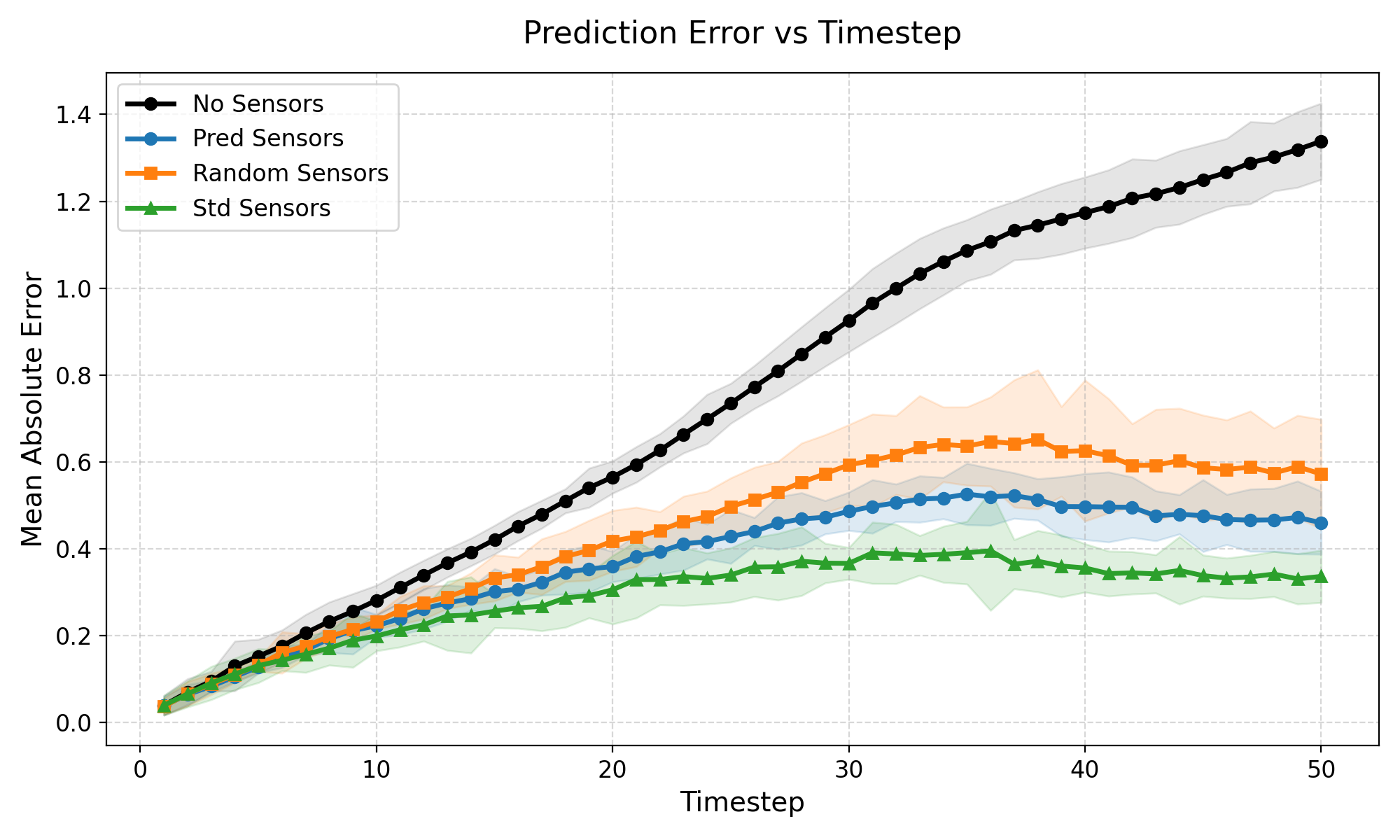}
        \caption{Mean absolute error from ground truth at different timesteps of forecast for different sensor placement techniques. Sensor placements improves the forecast. These values are for 50 sensor points with a minimum distance of 7 grid points between them.}
        \label{fig:bfs_plots/different_sampling_error}
    \end{minipage}
\end{figure}

First, we compare the single-step trained model with the multi-step trained diffusion model. We clearly observe in Figures \ref{fig:bfs_plots/single_vs_multi} and \ref{fig:bfs_plots/single_vs_multi_error} that the single-step trained diffusion model is unstable in longer rollouts. Due to this, we use the multi-step trained diffusion model for the rest of this study. Secondly, Figure \ref{fig:bfs_plots/different_sampling} shows that both the predicted sensor placement and the standard deviation based sensor placement pick points near the step, which is the most turbulent part of the flow. Figure \ref{fig:bfs_plots/different_sampling_error} also shows that the standard deviation and predicted sensor placement have lower errors than random sensor placement and no sensor placement, which was not evident in the homogeneous isotropic turbulence case. This is because of the presence of localized structures in the flow that are not captured in the random sensor placement.

\begin{figure}[!ht]
    \centering
    \includegraphics[width=\linewidth]{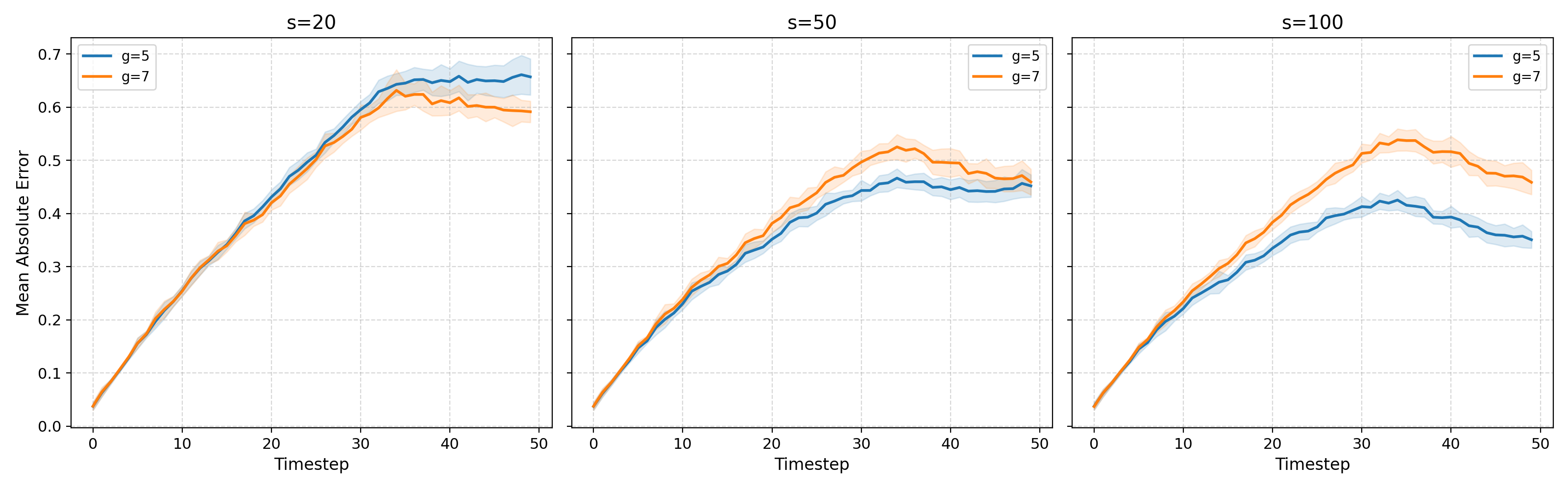}
    \caption{Plots for the mean absolute error from ground truth at different timesteps of forecast for different gap($g$) and number of sensor points($s$).}
    \label{fig:bfs_plots/points_gap_ablation}
\end{figure}

\begin{figure}[!ht]
    \centering
    \includegraphics[width=\linewidth]{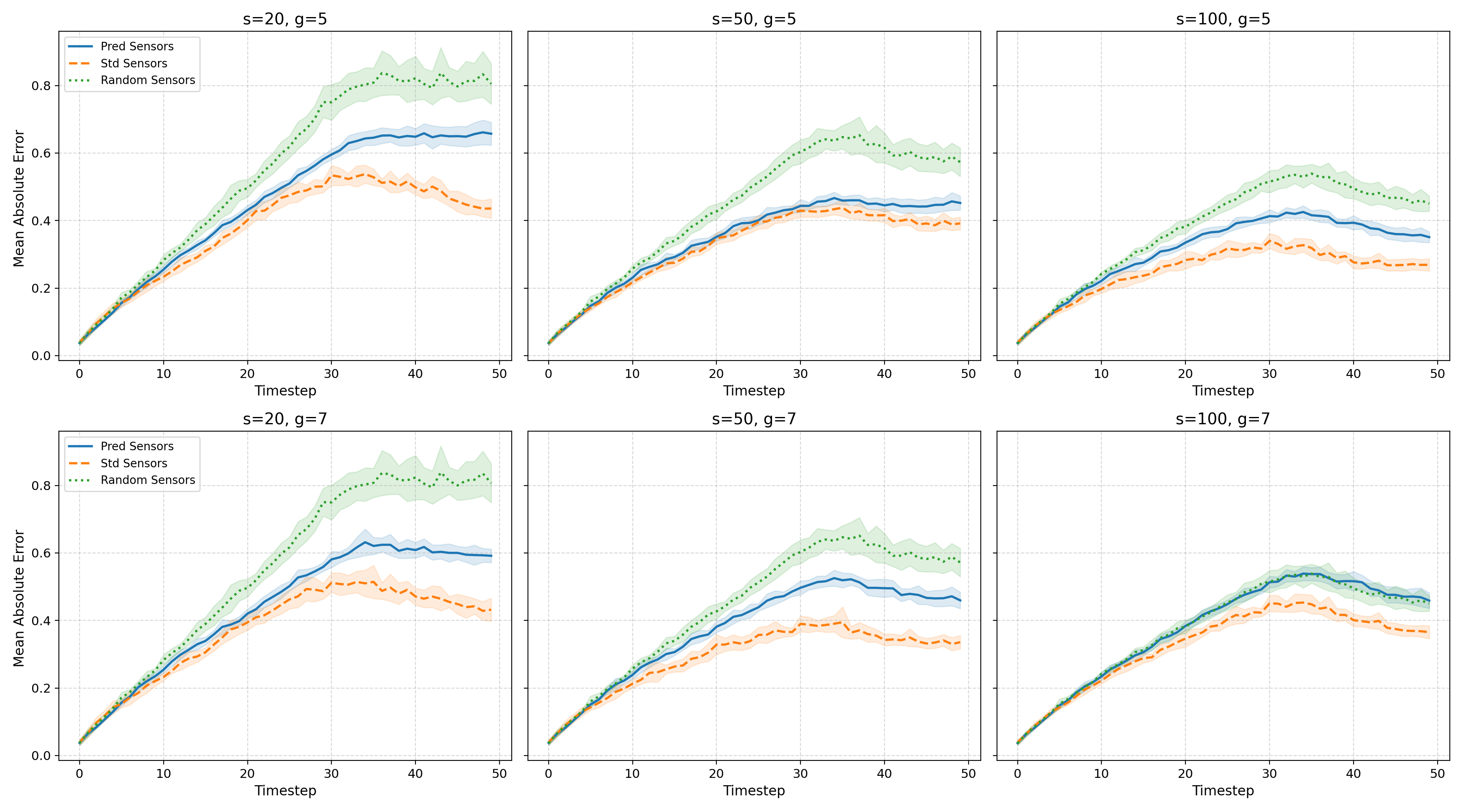}
    \caption{Plots for comparing different sensor placement techniques and their mean absolute error from ground truth at different timesteps of forecast for different gap($g$) and number of sensor points($s$). Random sensor placement and the standard deviation based sensor placement give the highest and lowest error respectively.}
    \label{fig:bfs_plots/points_gap_mask_ablation}
\end{figure}


\begin{figure}[!ht]
    \centering
        \centering
        \includegraphics[width=\linewidth]{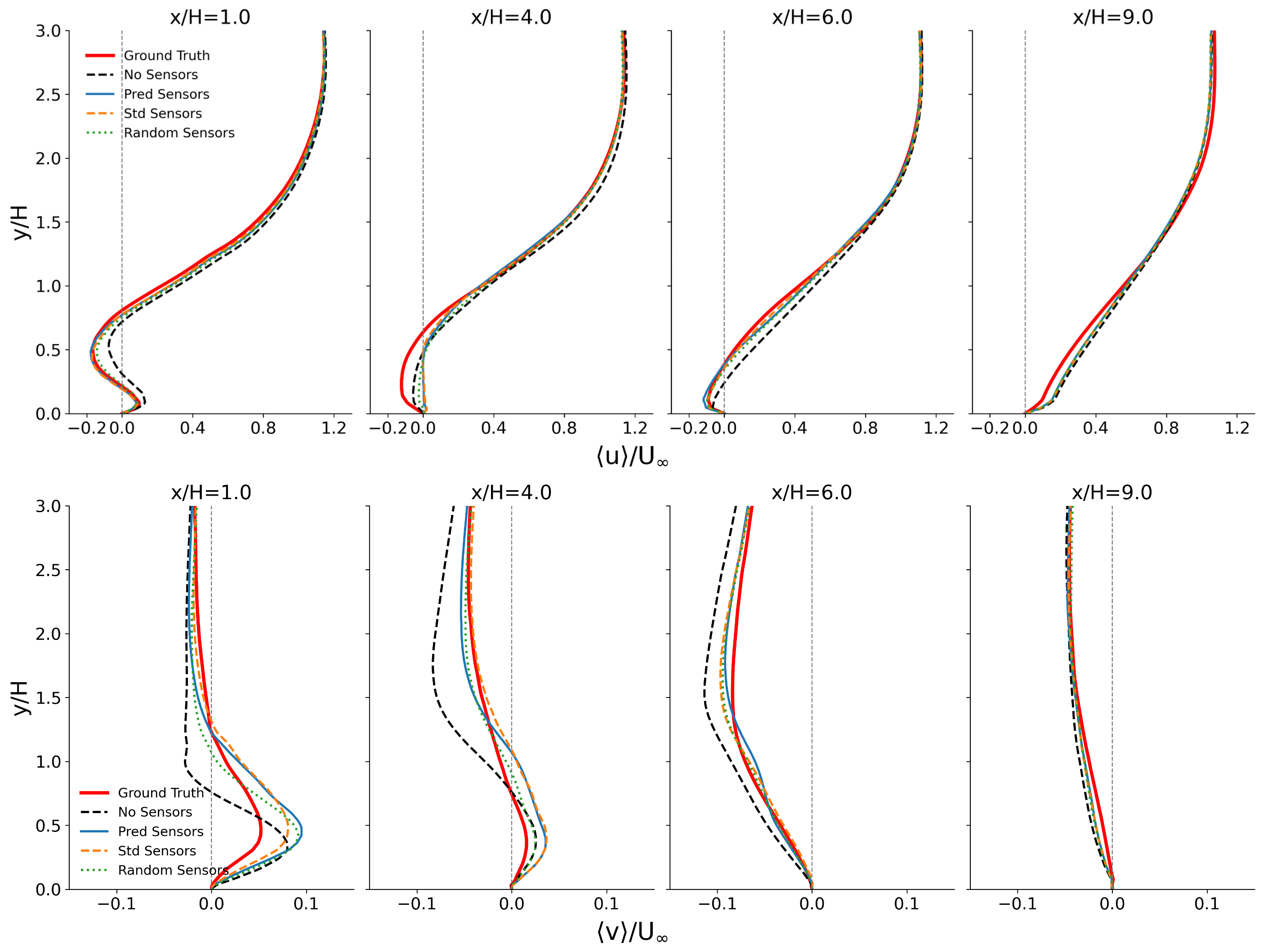}
        \caption{Predictions of the mean velocity profiles at different downstream locations of the BFS case obtained by different sensor placement techniques. \textbf{(Top)} Streamwise velocity, $<u> / U_{\infty}$; \textbf{(Bottom)} Wall-normal velocity, $<v>/ U_{\infty}$}
        \label{fig:bfs_plots/mean_profiles}
\end{figure}

\begin{figure}[!ht]
        \centering
        \includegraphics[width=\linewidth]{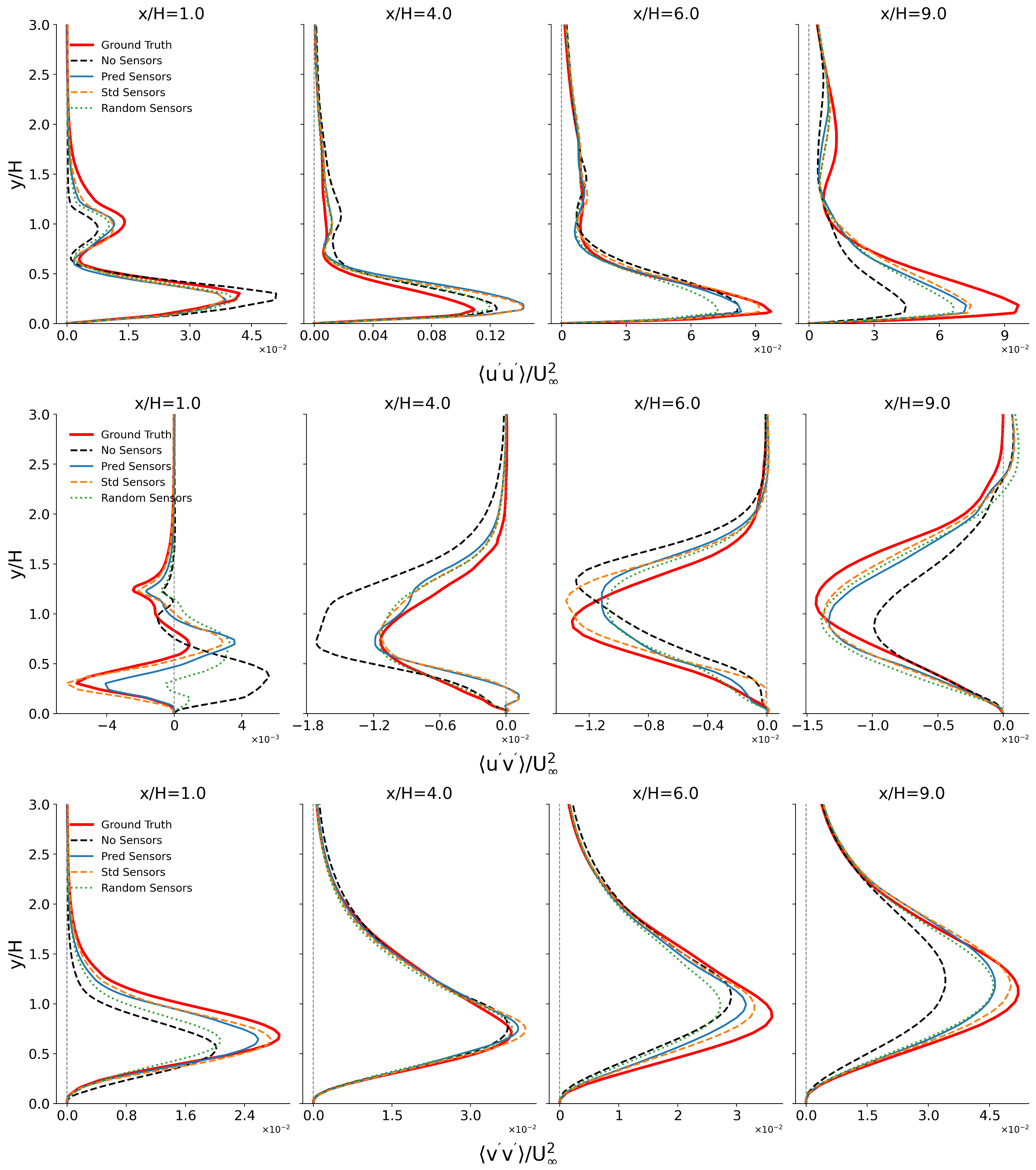}
        \caption{Predictions of the Reynolds stress profiles at different downstream locations of the BFS case obtained by different sensor placement techniques. \textbf{(Top)} Streamwise normal stress, $<u'u'>/ U_{\infty}^2$; \textbf{(Middle)} Shear stress, $<u'v'>/ U_{\infty}^2$; \textbf{(Bottom)} Vertical normal stress, $<v'v'>/ U_{\infty}^2$. The placement of sensors generally leads to improved prediction of stresses with the uncertainty-based placement strategy being most effective. }
        \label{fig:bfs_plots/2nd_profiles}
\end{figure}

In this case, we observed from Figure \ref{fig:bfs_plots/points_gap_ablation} that larger gaps($g$=7) help when the number of sensor points is low($s$=20). When the number of sensor points is larger ($s$=50,100), a smaller gap ($g$=5) has lower errors. This is because errors are more localized and a larger gap does not help by making sensors more distributed. This is opposite to what we observed in the homogeneous isotropic turbulence case. Moreover, Figure \ref{fig:bfs_plots/points_gap_mask_ablation} shows that for all cases except when the number of sensor points are too high ($s$=100), the standard deviation based sensor placement performs best followed by the predicted sensor placement. Figure \ref{fig:bfs_plots/points_gap_mask_ablation} also highlights that the gap in performance between the three sensor placement strategies reduce when the number of sensors increases.

To further evaluate the statistical consistency of the surrogate model under different sensor placement strategies, we compare the mean velocity profiles at several downstream locations in the two-dimensional BFS flow. Figure \ref{fig:bfs_plots/mean_profiles} shows the \textbf{(top)} streamwise mean velocity $\langle u \rangle / U_\infty$ and \textbf{(bottom)} wall-normal mean velocity $\langle v \rangle / U_\infty$ at $x/H = 1.0, 4.0, 6.0,$ and $9.0$, obtained via temporal averaging over the rollout. The ground truth profiles exhibit the characteristic separation and reattachment behavior of BFS flow, with negative streamwise velocity near the wall at upstream stations and gradual recovery downstream. Incorporating sensor measurements improves agreement with the ground truth across all downstream locations. While the differences among sensor placement strategies are relatively subtle for the mean profiles, informed placements (standard-deviation-based and predictive) generally provide slightly better reconstruction of the separated shear layer compared to random placement. Overall, the presence of sensors plays a more significant role than the specific placement strategy in recovering the key mean-flow features of the separated region. 

To further assess the model's ability to reproduce higher-order turbulence statistics, Figure~\ref{fig:bfs_plots/2nd_profiles} presents the Reynolds stress profiles at the same downstream locations. The \textbf{(top)} streamwise normal stress $\langle u'u' \rangle / U_\infty^2$, \textbf{(middle)} shear stress $\langle u'v' \rangle / U_\infty^2$, and \textbf{(bottom)} wall-normal normal stress $\langle v'v' \rangle / U_\infty^2$ are obtained through temporal averaging. Introducing sensor measurements improves both the magnitude and spatial localization of the stress peaks. In contrast to the mean profiles, the differences among placement strategies are more discernible for the Reynolds stresses, where the standard-deviation-based placement tends to yield the closest agreement overall, followed by the predictive and random placements. Nevertheless, all sensor-informed cases show clear improvement over the no-sensor configuration.

\subsection{Sampling Computational Costs}
We compare the sampling costs for the different sensor placement techniques for homogeneous isotropic turbulence and the backwards-facing step in Figure \ref{fig:benchmark_sampling}. 
We used 1 NVIDIA A100 for the homogeneous isotropic turbulence case (ensemble size 16) and 12 Intel Data Center Max 1550 Series for the backwards-facing step case (ensemble size 12). We observe in both cases that for the model predicted sensor placement and the standard deviation based sensor placement the computational cost of sampling increases approximately linearly with number of sampling points. This is consistent with our greedy sensor placement technique in Section \ref{sec:sensor_placement}. Here, the predicted sensor placement shows similar performance to the standard deviation based sensor placement as we keep the ensemble size fixed for both cases. However, the predicted sensor placement technique does not need any ensemble to place the sensors, which can make it ideal for a limited computational budget.


\begin{figure}[!ht]
    \centering
    \begin{minipage}[t]{0.52\textwidth}
        \centering
        \includegraphics[width=\linewidth]{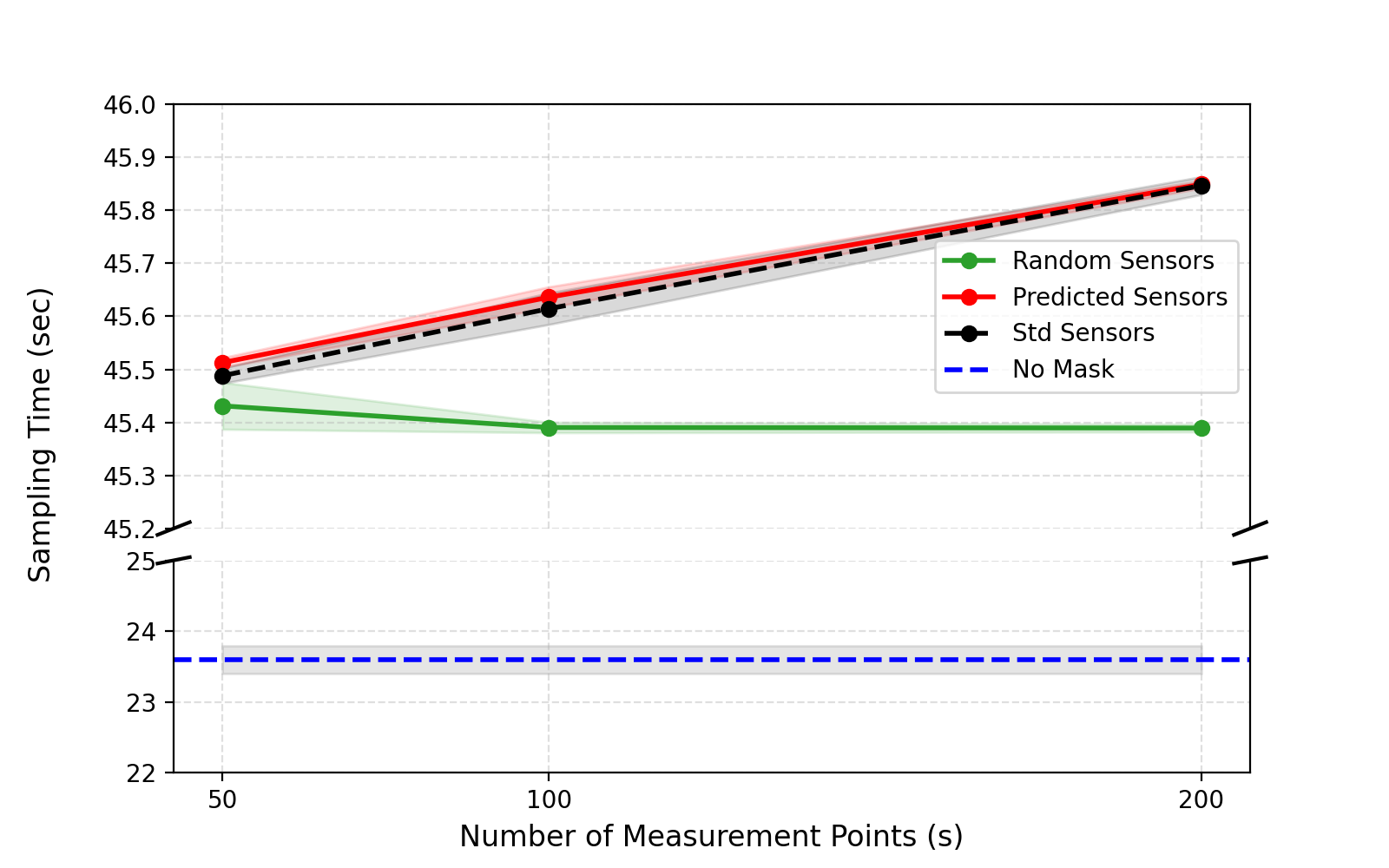}
    \end{minipage}
    \hfill
    \begin{minipage}[t]{0.47\textwidth}
        \centering
        \includegraphics[width=\linewidth]{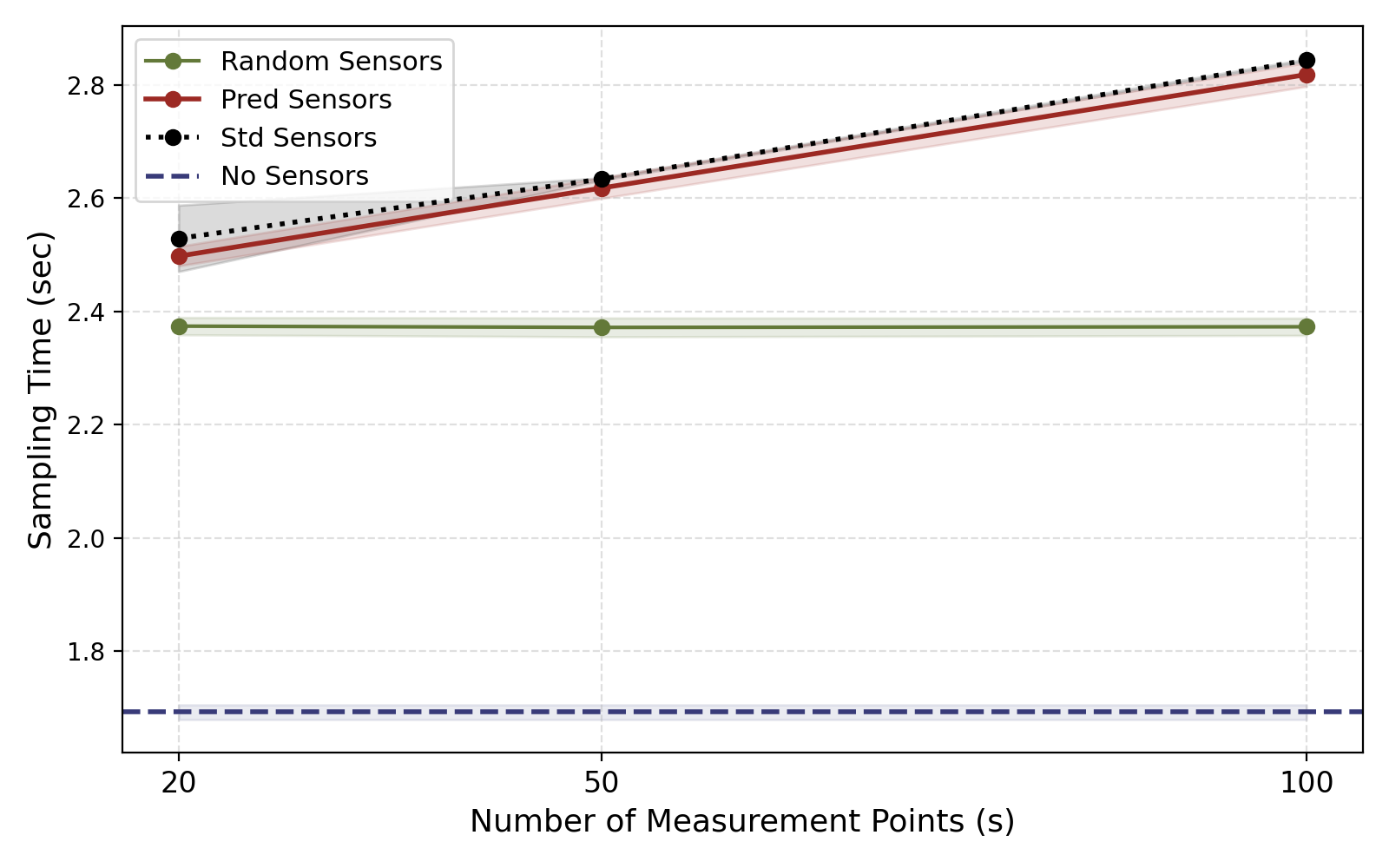}
    \end{minipage}

    \caption{Relationship between the number of measurement points/sensors ($s$) and the sampling time (in seconds) for various sensor placement techniques for (left) homogeneous isotropic turbulence sampling and (right) backward-facing step sampling. The shading represents uncertainty ($\pm 3\sigma$) over 5 runs.}
    
    \label{fig:benchmark_sampling}
\end{figure}

\section{Discussions and Conclusion}

We have presented a novel framework for the probabilistic forecasting of high-dimensional chaotic dynamical systems with exemplar applications such as turbulent flows. Our proposed approach uses deep learning diffusion models and provides stable long-term forecast horizons, a capacity to learn states provided on unstructured grids, and unifies Bayesian data assimilation and computationally efficient optimal sensor placement. 

Through experiments on benchmark datasets given by two-dimensional turbulence and flow over a backwards facing step, we discover that a multi-step autoregressive training objective improves physical consistency and stability over long rollouts, effectively mitigating the error growth typically observed in single-step training. Additionally, we exploit the Bayesian interpretation of the diffusion model (as a training data-informed prior) to enable rapid data assimilation from sparse observations of the ground truth. Furthermore, estimates of uncertainty from the diffusion model also enable adaptive sensor placement at the cost of ensemble member generation. To mitigate this extra cost, we show that a metamodel trained to generate predictions for the parent diffusion model can also be used for efficient sensor placement decision making. In heterogeneous flows, such as the backwards-facing step, uncertainty-driven and predictive sensor placement strategies outperform random sampling by targeting specific regions that show higher degrees of turbulence and anisotropy. We also find that the introduction of topology-aware suppression is important during the sensor placement process; by enforcing spatial separation between sensors, we maximize information gain and prevent the ``clustering" of measurements in localized high-error zones.

While the current work focuses on two-dimensional benchmarks, the graph-based nature of the architecture makes it inherently scalable to three-dimensional geometries and complex multi-physics problems \cite{barwey2025mesh}. Future work will extend this to more challenging cases and explore the integration of physics-informed constraints directly into the graph edge and node operations in the network to further enhance the conservation properties of the generated flow fields.

\section{Acknowledgments}

We acknowledge support from ARO cooperative agreement W911NF-25-2-0183 and an ARO ECP award from the Program `Modeling of Complex Systems' (PM - Dr. Rob Martin). This research used resources of the Argonne Leadership Computing Facility, which is a U.S. Department of Energy Office of Science User Facility operated under Contract DE-AC02-06CH11357. We acknowledge Shraddha Pathak, College of Information Sciences and Technology, Pennsylvania State University, for reviewing the error analysis in Section \ref{sec:error_analysis}. We also acknowledge computing resources from the Penn State Institute for Computational and Data Sciences.

\bibliographystyle{unsrtnat}
\bibliography{refs}

\appendix

\section{Training and Sampling Hyperparameters}\label{sec:hyperparams}

\subsection{Homogeneous Isotropic Turbulence}\label{sec:hit_hyperparams}

We use a DDPM++-style U-Net (SongUNet) \cite{song2020score} with positional timestep embedding, standard encoder/decoder, and circular padding for periodic boundaries. The model is trained to denoise with the EDM objective. Noise levels $\sigma$ are sampled log-normally with $\log\sigma \sim \mathcal{N}(P_{\mathrm{mean}}, P_{\mathrm{std}}^2)$. For the multi-step trained model we use the same model and train with $n_{\mathrm{steps}} = 4$ future steps and a short curriculum (single step for the first 1000\,kimg, then full $n_{\mathrm{steps}}$). Gradient checkpointing is also enabled to fit larger batches. The error prediction network used for the predictive sensor placement is a small encoder-decoder U-Net with base 64 channels, multipliers $[1, 2, 4, 8]$, two conv blocks per resolution, BatchNorm, ReLU, and dropout $0.1$, and outputs a non-negative error map. For sampling, we use the EDM sampler (Heun-style, second-order) with a log-linear $\sigma$ schedule in $\rho$-space. For unconditional generation we run the sampler from Gaussian noise; for conditioned (posterior) generation, we use the same sampler with an added likelihood score term, using the required sensor placement (Ref. \citet{chakraborty2025multimodal}). Other hyperparameters are mentioned in Table \ref{tab:combined_hyperparams}. These hyperparameters have been tuned empirically using a smaller subset of data and for a shorter training time. 

\section{Backwards Facing Step}

We use the graph neural network diffusion model with EDM preconditioning mentioned in Section \ref{sec:graph_diffusion_arch}. The multi-step training setup uses the same model and trains over $K=4$ rollout steps. We did not notice any effect of curriculum training here, as we set up the model to predict the difference between the current timestep (condition) and the forecast. Apart from these, we use similar training and sampling setup as for the homogeneous isotropic turbulence case. We reuse the same spatial U-Net structure (voxel pooling, same lengthscales) but with $\texttt{hidden\_channels}=128$, no time embedding, and TransformerConv blocks (4 heads, dropout $0.1$) for the error prediction model; the diffusion backbone is frozen and the prediction model is trained at $\sigma=80$ (pure noise) to match diffusion forecast error magnitude. The other hyperparameters are mentioned in Table \ref{tab:combined_hyperparams}.

\begin{table}[!ht]
\centering
\caption{Consolidated training and sampling hyperparameters for Homogeneous Isotropic Turbulence (HIT) and Backwards-Facing Step (BFS) experiments.}
\label{tab:combined_hyperparams}
\small
\begin{tabular}{lcccc}
\toprule
 & \multicolumn{2}{c}{\textbf{Homogeneous Isotropic Turbulence}} & \multicolumn{2}{c}{\textbf{Backwards-Facing Step (BFS)}} \\
\cmidrule(lr){2-3} \cmidrule(lr){4-5}
\textbf{Hyperparameter} & \textbf{Single} & \textbf{Multi} & \textbf{Single} & \textbf{Multi} \\
\midrule
\multicolumn{5}{l}{\textit{Model Architecture}} \\
\quad Backbone type & \multicolumn{2}{c}{SongUNet} & \multicolumn{2}{c}{Custom GNN in Section \ref{sec:graph_diffusion_arch}} \\
\quad Base / Hidden channels & \multicolumn{2}{c}{32} & \multicolumn{2}{c}{256} \\
\quad Channel multipliers & \multicolumn{2}{c}{[1, 2, 4, 8, 16]} & \multicolumn{2}{c}{-} \\
\quad Attention resolutions & \multicolumn{2}{c}{[64, 64], [32, 32], [16, 16]} & \multicolumn{2}{c}{All (4 heads)} \\
\quad Encoder/Decoder layers & \multicolumn{2}{c}{Standard} & \multicolumn{2}{c}{3 / [2, 2, 2] blocks} \\
\quad Padding / Voxel scales & \multicolumn{2}{c}{Circular (periodic)} & \multicolumn{2}{c}{[0.015, 0.03, 0.06]} \\
\midrule
\multicolumn{5}{l}{\textit{Compute Details}} \\
\quad GPU resource & \multicolumn{2}{c}{8 NVIDIA A100} & \multicolumn{2}{c}{12 Intel Data Center Max 1550 Series} \\
\quad Training Time(hrs) & \multicolumn{2}{c}{48} & \multicolumn{2}{c}{36} \\
\midrule
\multicolumn{5}{l}{\textit{Optimizer \& Trainer}} \\
\quad Optimizer & AdamW & AdamW & AdamW & AdamW \\
\quad Learning rate & $5\times10^{-4}$ & $5\times10^{-4}$ & $5\times10^{-4}$ & $5\times10^{-4}$ \\
\quad Unroll steps ($n_{\mathrm{steps}}$) & 1 & 4 & 1 & 4 \\
\quad Total training (kimg) & 20,000 & 20,000 & 10,000 & 10,000 \\
\quad LR ramp-up (kimg) & 2,000 & 500 & 1,000 & 1,000 \\
\quad Curriculum (kimg) & - & 1,000 & - & - \\
\midrule
\multicolumn{5}{l}{\textit{EDM Preconditioning \& Sampling}} \\
\quad $\sigma_{\min}$ / $\sigma_{\max}$ & 0.005 / 80 & 0.005 / 80 & 0.002 / 80 & 0.002 / 80 \\
\quad $P_{\mathrm{\mu}}$ / $P_{\sigma}$ & $-1$ / $1.5$ & $-1$ / $1.5$ & $-0.2/2.2$ & $-0.2/2.2$ \\
\quad ODE Steps & 50 & 50 & 30 & 30 \\
\quad $\rho$ schedule & 7 & 7 & 7 & 7 \\
\quad $S_{\mathrm{churn}}$ / $S_{\mathrm{noise}}$ & 80 / 1.003 & 80 / 1.003 & 80 / 1.003 & 80 / 1.003 \\
\quad $S_{\min}$ / $S_{\max}$ & 0.01 / 50 & 0.01 / 50 & 0.01 / 50 & 0.01 / 50 \\
\bottomrule

\end{tabular}
\end{table}

\end{document}